\newtheorem{theorem}{Theorem}
\newtheorem{definition}{Definition}
\newtheorem{lemma}{Lemma}
\newtheorem{corollary}{Corollary}
\newtheorem{remark}{Remark}
\newtheorem{proposition}{Proposition}
\definecolor{mydarkblue}{RGB}{0,0,139} 
\title{Comparative Analysis of Indicators for Multiobjective  Diversity Optimization
	\thanks{The authors gratefully acknowledge financial support by NWO,  The Netherlands, TOP Grant 18036 : Excellent Buildings: Realistic geometries for optimal structural, thermal, and lighting performance.}}
\author{
	Ksenia Pereverdieva \\
	Leiden Institute of Advanced Computer Science \\
	Leiden University  \\
	Leiden, The Netherlands \\
	\And
	André Deutz \\
	Leiden Institute of Advanced Computer Science \\
	Leiden University \\
	Leiden, The Netherlands \\
	\And
	Tessa Ezendam \\
	Dept. of Built Environment \\
	Eindhoven University of Technology \\
	Eindhoven, The Netherlands \\
	\And
	Thomas Bäck \\
	Leiden Institute of Advanced Computer Science \\
	Leiden University \\
	Leiden, The Netherlands \\
	\And
	Hèrm Hofmeyer \\
	Dept. of Built Environment \\
	Eindhoven University of Technology \\
	Eindhoven, The Netherlands \\
	\And
	Michael T.M. Emmerich \\
	Faculty of Information Technology, University of Jyväskylä \\
	Jyväskylä, Finland \\
	michael.t.m.emmerich@jyu.fi
}
\begin{document}
	\maketitle
	
	\begin{abstract}
		Indicator-based (multiobjective) diversity optimization aims at finding a set of near (Pareto-)optimal solutions that maximizes a diversity indicator, where diversity is typically interpreted as the number of essentially different solutions. Whereas, in the first diversity-oriented evolutionary multiobjective optimization algorithm, the NOAH algorithm by Ulrich and Thiele, the Solow Polasky Diversity (also related to Magnitude) served as a metric, other diversity indicators might be considered, such as the parameter-free Max-Min Diversity,  and the Riesz \(s\)-Energy, which features uniformly distributed solution sets. In this paper, focusing on multiobjective diversity optimization, we discuss different diversity indicators from the perspective of indicator-based evolutionary algorithms (IBEA) with multiple objectives. We examine theoretical, computational, and practical properties of these indicators, such as monotonicity in species, twinning, monotonicity in distance, strict monotonicity in distance, uniformity of maximizing point sets, computational effort for a set of size~\(n\), single-point contributions, subset selection, and submodularity. We present new theorems---including a proof of the NP-hardness of the Riesz \(s\)-Energy Subset Selection Problem---and consolidate existing results from the literature.
		In the second part, we apply these indicators in the NOAH algorithm and analyze search dynamics through an example. We examine how optimizing with one indicator affects the performance of others and propose NOAH adaptations specific to the Max-Min indicator.
	\end{abstract}
	
	\keywords{Diversity indicators \and Multi-objective optimization \and Computational effort \and Submodularity  \and Isometry invariance \and Solow-Polasky Diversity \and Riesz s-Energy \and Max-min Diversity}
	
	\maketitle

	\section{Introduction}
	
	In the field of multi-objective optimization and constraint satisfaction, the difficulty often goes beyond simply finding optimal or feasible solutions. An increasingly important factor is the need to produce a diverse set of near-optimal or feasible solutions. This process, referred to as diversity optimization, seeks to offer decision makers a range of potential solutions that are both close to optimality and varied from one another. The reasoning behind this is that certain criteria, which determine the 'best' solution, are often subjectively assessed and may not easily be expressed mathematically. An example motivating our research is building design, which combines subjective criteria like aesthetics with objective criteria such as energy performance and structural compliance\cite{pereverdieva2023prism,ezendam2022partitioning}.
	
	An early attempt to meet these challenges are cluster-based genetic algorithms by Parmee \cite{parmee2000evolutionary}, which aim to detect different high-performance regions in design optimization. Another important contribution is Evolutionary Algorithms for Generating Alternatives (EAGA) by Emily Zechman \cite{zechman2005comparative}, which uses approaches to generate diverse sets of alternative solutions by relaxing the optimality requirement and also considering solutions that are near optimal, while near-optimality is expressed by an acceptability threshold. The Omni-Optimizer, introduced by Deb and Tiwari \cite{deb2005omni}, focuses on multiglobal, multiobjective optimization and the detection of different optima, but it follows the design paradigm of the NSGA-II algorithm, prioritizing convergence over diversity. 
	Quality Diversity Optimization (QD), as explored, for example, by Hagg \cite{hagg2020quality}, offers an alternative direction on this by providing best-of-class solutions in a design space that is partitioned into subspaces by means of certain features. Compared to the previously mentioned approaches, it requires that the user defines not only an optimization model, but also a feature space. 
	In the field of multiobjective optimization, decision space diversity has gained attention, beginning with the seminal work of Rudolph et al. \cite{rudolph2007capabilities}. This was followed by various algorithmic proposals aimed at enhancing decision space diversity through niching techniques (\cite{shir2009enhancing}), as well as an early attempt to incorporate the indicator-based approach using a cuboid covering metric \cite{ulrich2010integrating}. Additionally, application-oriented studies, such as the work by Preuss et al. \cite{preuss2010decision}, have contributed to this growing body of research. 
	Using the indicator-based evolutionary algorithm paradigm, Ulrich and Thiele \cite{ulrich2011diversity} addressed the "diversity optimization" challenge by developing techniques that seek to directly maximize the Solow-Polasky (SP) diversity indicator of a population in an algorithm they term NOAH. We seek to further explore the domain of indicator-based diversity optimization, in particular by using alternative diversity indicators.
	
	
	
	In this paper we focus on optimizing indicators based on the distance between solutions in a set, namely Max-min Diversity, Riesz s-Energy, and Solow Polasky Diversity. At the same time, 
	we exclude indicators that  
	do not tend to distribute uniformly, such as total distance (\cite{ulrich2011diversity}, Fig. \ref{fig:counterintuitive}), are computationally intractable, such as the Weitzman diversity\cite{weitzman1992diversity}, and that focus more on the aspect of species abundance, such as entropy\cite{leinster2013magnitudeOfMetricSpaces}. See \cite{basto2017survey} for a broader survey.
	We examine the theoretical properties of diversity indicators, focusing on aspects like \textit{monotonicity in species}, ensuring that adding a species does not reduce diversity; \textit{twinning}, examining the effect of identical or very similar species; \textit{monotonicity in distance}, ensuring that increasing the distance between species increases diversity; \textit{strict monotonicity in distance}; \textit{uniformity of maximizing point sets}; \textit{computational effort} for a set of size~\(n\); \textit{single-point contributions}; \textit{subset selection}; and \textit{submodularity}. As an important side result, we prove that the Riesz \(s\)-Energy subset selection is NP-hard in general metric spaces and provide new results in computing single-point contributions.

	In the empirical analysis we investigate how various diversity indicators perform within the NOAH algorithm for multiobjective optimization problems. Originally designed for optimization with SP diversity, we tested it with other indicators, in particular Max-Min Diversity and Riesz s-Energy. 
	In addition, we study how the optimization of one of the diversity indicators affects the performance of the other diversity indicators. 
	
	
	The paper is structured as follows. Section \ref{sec:background} analyzes the properties of the selected diversity indicators. Section \ref{sec:noah} empirically examines these indicators in the NOAH algorithm and discusses specific adjustments. We conclude by summarizing key findings and future research directions in Section \ref{sec:discussion_and_outlook}.

	
	
	\section{Background and Theoretical Properties}
	\label{sec:background}
	
	This section derives theoretical properties for selected diversity indicators. While at first glance it may seem inconsequential which diversity indicator is chosen, as they all, in some way, reward diversity, it is important to take a deeper look and conduct a theoretical analysis. See Fig. \ref{fig:counterintuitive} for an example where intuition may be misleading. The example shows that comparing two sets based on the minimal gap -- or the Max-Min Diversity -- grossly neglects the benefits of distancing points that do not form a closest pair.  The example also shows that maximizing the sum of pairwise distances does not succeed in distributing points evenly; quite the opposite, it leads to clustered data sets (cf. \cite{ulrich2011diversity}).
	
	In the sequel, we will use the notion of similarity space, and the diversity indicators we will study will also be defined for finite similarity spaces; the definition is as follows.
	\begin{definition} \label{def:similarity-space}
		The pair
		$(S,d)$ is  a \emph{similarity space} if $S$ is a set and $d: S\times S \rightarrow \mathbb{R}$ is a function satisfying
		\begin{enumerate}
			\item $\forall x \in S, \forall y \in S: d(x,y) \geq 0$, 
			\item $\forall x  \in S: d(x,x) =0 $, and
			\item $\forall x \in S, \forall y \in S: d(x,y) = d(x,y)$.
		\end{enumerate}
		As usual we define the distance of $s \in S$ to a finite subset $X$ of $S$ as follows (notation $d(s,X)$): $$d(s, X) := \min \{ d(s,x)\, |\, x \in X \}.$$ 
		The distance matrix (notation: $\mbox{dm}$ or $\mbox{dm}(X)$ or $\mbox{dm}(X, d)$) of  $X = \{x_1,\cdots, x_n \} \subseteq S$ is $\mbox{dm}_{ij} := d(x_i,x_j)$.  
	\end{definition}
	\begin{remark}
		Note that some authors require $\forall x \in S, \forall y \in S \mbox{ such that } x \neq y: d(x,y) > 0 $ for $(S, d)$ to be a similarity space; in other words, $d$ does not have to satisfy the triangle inequality but satisfies the remaining axioms of a metric space. On reason has to do whether a similarity matrix is positive definite or not. Another reason for this requirement is that $d(x, X) = 0$ implies $x \in X$ for a subset of $S$. For a pair $(S,d)$ satisfying Definition \ref{def:similarity-space} we have $x \in X$ implies $d(x,X) = 0 $ but in general the converse is not true. See example below. 
		
		Example: Let $S=\{x_1,x_2,x_3\}$ be a similarity space with\\
		$d(x_1,x_2)=d(x_2,x_1)=1, d(x_1,x_3) = d(x_3,x_1)=1, d(x_2,x_3)=d(x_3,x_2)=0$, and for each $x$ in $S$: $d(x,x)=0$. Let $X =  \{x_1, x_3\}$ then $x_2 \not \in X$ despite the fact that $d(x_2, X) = 0$.  
		
		In the sequel we assume that our similarity spaces 
		are non-trivial. A similarity space is trivial if for any pair of points the distance is zero. This entails that the metric spaces we consider have cardinality bigger than one. 
	\end{remark}
	Diversity can be understood through two key aspects: similarity and abundance\cite{tomLeinster2021entropyDiversityAxiomatic}. Similarity-based measures assess how many distinct points exist in a set and how far apart they are. Abundance-based measures focus on how evenly points are distributed across predefined groups. Some diversity measures emphasize one aspect more than the other. Here, we will focus on similarity-based diversity measures, which are often most useful for diversity optimization.
	\begin{itemize}
		\item \textbf{Max-Min Diversity}: $D_{\mbox{max-min}}(X) \, =  \, \min\{d(x,y) \, | \, x \in X , y \in  X , x \neq y \}.$ 
		Algorithms using this indicator aim to maximize the value of this indicator, leading asymptotically to uniformly spaced sets, and is often used for diversity assessment in metaheuristics like tabu search and GRASP \cite{porumbel2011simple}.   
		\item \textbf{Riesz s-Energy}: 
		$E_s(X) = \sum_{x_i, x_j \in X, d(x_i,x_j)\neq 0} \frac{1}{d^s (x_i x_j)}$.\\
		(In case for all pairs of elements in $X$ the distance is zero, $E_s(X) := 0$.)
		Defines diversity as the total energy of a system of particles with varying species composition, and yields uniformly distributed sets \cite{hardin2005minimal}. Pairwise potential functions were proposed for evenly covering Pareto fronts in multiobjective optimization 
		\cite{purshouse2013generalizedDecomposition}, and specifically employing Riesz s-Energy  in \cite{falcon2019cri,Falcon-Guillermo2024RieszEnergy}.
		\item \textbf{Solow Polasky Diversity}: $SP(X,\theta)= \sum_{i, j \in \{1,\dots, n \}}(\mbox{sim}(X, \theta)^{-1})_{ij}$, $\theta > 0$\\ $X = \{x_1, \dots, x_n\}$ where
		$\mbox{sim}(X,\theta)$ is the similarity matrix of $X$ defined by $\mbox{sim}(X, \theta)_{ij} := \exp(-\theta d(x_i,x_j))$.  
		Solow and Polasky \cite{solow1993measurement} base diversity on species correlation where a species is viewed as a collection of traits.  
		It is based on the fundamental mathematical concept of magnitude of a finite metric  space in  \cite{huntsman2023diversity,tomLeinster2021entropyDiversityAxiomatic} which was introduced in \cite{solowPolasky1993MeasuringBioDiv} and in \cite{leinster2013magnitudeOfMetricSpaces}. 
		Leinster \cite{tomLeinster2021entropyDiversityAxiomatic} (p. 212) describes magnitude as "the effective number of points" in a finite metric space, or more precisely, "the effective number of completely separate points.".
		\begin{enumerate}
			\item Let $\mbox{sim}(X, \theta)$ be the  similarity matrix associated to $X$ and $\theta$ (as defined above).  Note that in general this matrix does not have to be invertible. Note that the similarity matrix (for finite metric (sub)spaces as well as for similarity (sub)spaces) is symmetric.
			\item Next we describe when a similarity matrix has a magnitude. 
			\emph{Assume} that there exists a column vector 
			$w =
			\left [
			\begin{array}{cccc}
				w_1 &
				w_2 &
				\cdots &
				w_n        
			\end{array}
			\right ]^{\top} 
			\in \mathbb{R}^n
			$ 
			($n \times n$ is the size of the similarity matrix) such that
			$\mbox{sim}(X, \theta)\cdot w = \mathbf{1}$ (where 
			$ \mathbf{1} = 
			\left [
			\begin{array}{cccc}
				1 &
				1 &
				\cdots &
				1        
			\end{array}
			\right ]^{\top}
			$ )
			(Such a $w$ is called a weighting. For symmetric matrices the existence of a weighting implies the existence of a coweighting $v$ and conversely, where a coweighting is a row vector $v$ such that $v \cdot \mbox{sim}(X, \theta) = 
			\left [\begin{array}{cccc}
				1 & 1 & \dots & 1
			\end{array}
			\right ]
			$
			).  If matrix has a weighting and coweighting than it is easy to show that the sum of the components of each weighting is an invariant, that is, if $w$ and $w'$ are two weightings, then 
			$\sum_{i = 1}^n w_i = \sum_{i = 1}^n w'_i $. 
			This sum is called the magnitude of the matrix and the magnitude of the similarity matrix of a finite metric space (or of the finite similarity space) is the magnitude of its similarity matrix. 
			Note that in case the similarity matrix is invertible then there is a unique weighting, namely
			$(\mbox{sim}(X, \theta))^{-1}\cdot \mathbf{1}$ and the magnitude is equal to $\sum_{i, j \in \{1,\dots, n \}}(\mbox{sim}(X, \theta)^{-1})_{ij}$.
			\item The Solow-Polasky diversity indicator $\mbox{SP}(X,\theta)$, is the magnitude of the similarity matrix of $X$ in case the magnitude is defined otherwise it is undefined. For instance, in case the similarity matrix is positive definite the magnitude is defined.   
		\end{enumerate}    
	\end{itemize}

	\begin{figure}[t]
		\centering
		\begin{tikzpicture}
			\draw[thick,blue] (0,0) -- (2,0);
			\foreach \x/\name in {0/a, 0.2/b, 1.8/c, 2/d} {
				\fill (\x,0) circle (2pt);
				\node[above] at (\x,0) {\textbf{\name}};
			}
			\node[left] at (0,0) {(A)\, };
			
			\draw[thick,blue] (3,0) -- (5,0);
			\foreach \x/\name in {3/a, 3.2/b, 4/c, 4.2/d} {
				\fill (\x,0) circle (2pt);
				\node[above] at (\x,0) {\textbf{\name}};
			}
			\node[left] at (3,0) {(B)\, };
		\end{tikzpicture}
		\quad
		\begin{tikzpicture}
			\draw[thick,blue] (0,0) -- (2,0);
			\foreach \x/\name in {0/a, 0.66/b, 1.33/c, 2/d} {
				\fill (\x,0) circle (2pt);
				\node[above] at (\x,0) {\textbf{\name}};
			}
			\node[left] at (0,0) {(C)\, };
		\end{tikzpicture}
		\caption{\label{fig:counterintuitive} 
			Max-min is identical for both (A) and (B), although all pairwise distances increase from (A) to (B), except for nearest neighbors, which remain unchanged. 
			The Sum indicator (aka Total Distance Indicator, sum of all pairwise distances in the set) for (A) is larger than for (C) though intuitively the dissimilarity is much larger for (C). Also the spread and evenness are much better in (C).
		}
	\end{figure}
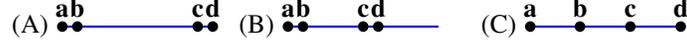

	For the three diversity indicators we introduced below the  Definitions
	\ref{def:contribution}, 
	\ref{def:monotonicity-in-species},
	\ref{def:twinning},  
	\ref{def:monotonicity-in-distance}, 
	\ref{def:D-is-submodular},
	\ref{def:isometry-invariant}, and
	\ref{def:subset-selection-complexity}
	of the properties of these indicators we want to study are given. 
	In the definitions below $D$ denotes a diversity indicator and ($X = \{ x_1, x_2,\cdots, x_n \}$ a subset of $S$, the similarity space). 
	As preliminary steps we recall the definition of a set function to be submodular via the following lemma (Lemma \ref{lemma:submodular}) -- any set function satisfying the properties of this lemma is submodular. Secondly we define for $x \in S, X \subset S$ to be a  duplicate with respect to a subset $X$:
	\begin{definition}
		An element $x \in S$ is a  duplicate with respect to $X \subset S$ iff $\exists y \in X$ such that $d(x,y) = 0$ and $\forall z \in X: d(x,z) = d(y,z)$. 
	\end{definition}
	\begin{remark}
		In metric spaces, it follows that a duplicate $x$ with respect to $X$ is a member of $X$  (unless bags are allowed) and
		any $y \in Y \subseteq S$ is a duplicate with respect to $Y$.
	\end{remark}
	\begin{lemma} \label{lemma:submodular}
		Let S be a \emph{finite} set and $f: \mathcal{P}(S) \rightarrow \mathbb{R}$. 
		The following three properties of $f$ are equivalent.
		\begin{enumerate}
			\item $\forall Y \in \mathcal{P}(S), \forall Z \in \mathcal{P}(S) \mbox{ such that } Y \subseteq Z, \forall x \in S\setminus Z $ the following holds.
			$( f(Y\cup \{x \}) - f(\{Y\}) ) \geq (f(Z\cup \{ x\}) - f(Z) )$,
			\item $\forall T_1 \in \mathcal{P}(S), \forall T_2 \in \mathcal{P}(S):  f(T_1)+f(T_2) \geq f(T_1 \cup T_2) + f(T_1 \cap T_2) $, 
			\item $\forall Y \in \mathcal{P}(S), \forall x_1 \in S \setminus Y, \forall x_2 \in S \setminus Y \mbox{ such that } x_1 \neq x_2$ the following holds:
			$f(Y \cup \{ x_1\}) + f(Y \cup \{ x_2\}) \geq f(Y \cup \{ x_1, x_2\}) + f(Y)$
		\end{enumerate}
	\end{lemma}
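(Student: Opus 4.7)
The plan is to establish the circle of implications (1) $\Rightarrow$ (2) $\Rightarrow$ (3) $\Rightarrow$ (1). The easiest step is (2) $\Rightarrow$ (3): instantiating (2) with $T_1 = Y \cup \{x_1\}$ and $T_2 = Y \cup \{x_2\}$ works directly, since the hypothesis $x_1 \neq x_2$ together with $x_1, x_2 \notin Y$ guarantees $T_1 \cup T_2 = Y \cup \{x_1, x_2\}$ and $T_1 \cap T_2 = Y$, which is precisely what (3) asserts.

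For (1) $\Rightarrow$ (2), I would use a telescoping argument. Fix an enumeration $T_2 \setminus T_1 = \{y_1, \ldots, y_k\}$ and define the two parallel chains $A_i = (T_1 \cap T_2) \cup \{y_1, \ldots, y_i\}$ and $B_i = T_1 \cup \{y_1, \ldots, y_i\}$, so that $A_0 = T_1 \cap T_2$, $A_k = T_2$, $B_0 = T_1$, and $B_k = T_1 \cup T_2$. The containment $A_{i-1} \subseteq B_{i-1}$ is preserved at each step, and $y_i \notin B_{i-1}$ because $y_i \in T_2 \setminus T_1$ and the $y_j$ are distinct. Applying (1) at each step yields $f(A_i) - f(A_{i-1}) \geq f(B_i) - f(B_{i-1})$, and summing telescopes to $f(T_2) - f(T_1 \cap T_2) \geq f(T_1 \cup T_2) - f(T_1)$, which rearranges to (2).

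The more delicate direction is (3) $\Rightarrow$ (1), which I would handle by induction on $m = |Z \setminus Y|$. The base case $m = 1$ is exactly (3) after writing $Z = Y \cup \{z\}$ and taking $(x_1,x_2) = (x,z)$, with the side conditions $x \neq z$ and $x,z \notin Y$ falling out of $x \in S \setminus Z$ and $Y \subseteq Z$. For the inductive step with $m \geq 2$, I would pick any $z \in Z \setminus Y$, set $Y' = Y \cup \{z\}$, apply the base case to the pair $(Y, Y')$ to obtain $f(Y \cup \{x\}) - f(Y) \geq f(Y' \cup \{x\}) - f(Y')$, and then invoke the induction hypothesis on the pair $(Y', Z)$, whose gap has dropped to $m-1$, to obtain $f(Y' \cup \{x\}) - f(Y') \geq f(Z \cup \{x\}) - f(Z)$; chaining the two inequalities yields (1).

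The main obstacle I anticipate is essentially bookkeeping: verifying in each step that the inserted elements do not already lie in the relevant set, in particular that $x \notin Y'$ when invoking the base case inside the (3) $\Rightarrow$ (1) induction (which follows from $Y' \subseteq Z$ and $x \in S \setminus Z$) and that $y_i \notin B_{i-1}$ in the telescoping argument (which follows from $y_i \in T_2 \setminus T_1$ together with the distinctness of the enumeration). Finiteness of $S$ is used only to guarantee that the enumeration and induction terminate. Once the membership checks are in place, each implication reduces to a short manipulation.
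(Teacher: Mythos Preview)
Your argument is correct. The cycle (1) $\Rightarrow$ (2) $\Rightarrow$ (3) $\Rightarrow$ (1) is the standard route, and each implication is handled cleanly: the telescoping chains for (1) $\Rightarrow$ (2) are set up properly, the instantiation for (2) $\Rightarrow$ (3) is immediate, and the induction for (3) $\Rightarrow$ (1) goes through with the membership checks you flag. One tiny omission: your induction on $m = |Z \setminus Y|$ starts at $m = 1$, but (1) also quantifies over $Y = Z$ (the case $m = 0$); this case is a tautology, so just add a one-line remark.

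As for comparison with the paper: the paper does not actually prove Lemma~\ref{lemma:submodular}. It is presented as a recalled, standard characterization of submodularity (``As preliminary steps we recall the definition of a set function to be submodular via the following lemma''), and no proof is supplied. So your write-up would in fact be an addition rather than a replacement.
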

	
	\begin{definition}[Theoretical Properties of Diversity Indicators]
		Next we define the theoretical properties of diversity indicators that will be investigated for selected indicators in this work.
		\begin{enumerate}    
			\item 
			Let $x \in S$. Then the individual contribution of $x$ to the diversity $D$ with respect to $X$, notation $\mbox{indD}(x)$, is defined as follows: $\mbox{indD}(x) := D(X \cup \{x \}) - D(X\setminus \{ x \})$.
			
			The individual contribution of all elements of $X$ to the diversity $D$, notation $\mbox{allC}(X)$, is the following n-tuple where $n$ is the size of $X$: $\mbox{allC}(X) := (\mbox{indD}(x_1),\mbox{indD}(x_2), \cdots, \mbox{indD}(x_n) ) $. \label{def:contribution}
			\item 
			Monotonicity in Varieties (aka monotonicity of set functions). A diversity indicator $D$ is Monotonic in Varieties, if $D(S \cup \{ b \} ) > D(S)$, when $b \not \in S$ (in metric space this is equivalent to $\min(\{ d(b,s) \, | \, s \in S \}) > 0$). (In similarity spaces: $d(x, X)> 0$ implies $x \not \in X$ but in the general the converse does not hold.) \label{def:monotonicity-in-species}
			\item 
			Twinning: In case $x$ is a duplicate with respect to $X \subset S$, then $D(X \cup \{ x\}) = D(X)$.  
			\label{def:twinning}
			\item 
			Monotonicity in Distance: for all $A' \subseteq M$, for all $A \subseteq M$ such that $A'$ and $A$ are finite and $|A'| = |A|$, and there is a bijection of $A'$ onto $A$ such that $d(a_{i'}, a_{j'}) \geq d(a_i,a_j)$, then $D(A') \geq D(A)$. Moreover, in case at least one distance $d(a_{i'}, a_{j'})$ is strictly larger than $d(a_i,a_j)$, the diversity of $A'$ should be strictly larger than the diversity of $A$, that is, $D(A') > D(A)$. The latter is called strict monotonicity in distance.\label{def:monotonicity-in-distance}
			\item An indicator  $D$ satisfying one (and thus all) of the properties stated in Lemma \ref{lemma:submodular} is \emph{submodular}. For instance, 
			Let $D: \mathcal{P}(S) \rightarrow \mathbb{R}$ be a diversity indicator, then $D$ is \emph{submodular}, iff $\forall T_1 \in \mathcal{P}(S), \forall T_2 \in \mathcal{P}(S):  D(T_1)+D(T_2) \geq D(T_1 \cup T_2) + D(T_1 \cap T_2) $.\label{def:D-is-submodular} 
			\item D is isometry invariant iff for each isometry $\iota: S \rightarrow S$ and for each subset $X$ of $S$: $D(X) = D(\iota (X))$, where $\iota: S \rightarrow S$ is an isometry of $S$ iff $\forall s_1 \in S, \forall s_2 \in S: d(s_1,s_2) = d(\iota(s_1), \iota(s_2)).$ (Note that for infinite spaces $\iota$ is in general injective but not surjective.) \label{def:isometry-invariant}
			\item The complexity of the $k$-subset selection problem: given a $X \subset S$ find a subset $\tilde{X}$ of $X$ of size $k$  such that $D(\tilde{X})$ is a global optimum of the set $\{ D(X') \, | \, X' \in \mathcal{P}(X) \mbox{ and } |X'| = k \}$ \label{def:subset-selection-complexity}
		\end{enumerate}
	\end{definition}
	\subsection{Theoretical properties of the considered diversity indicators }
	In the following we analyse the properties of the indicators the conclusions will be summarized in Table \ref{table:propertiesOfDivIndicators}.
	\subsubsection{Theoretical Properties of the Riesz s-energy indicator}
	\begin{enumerate}
		\item \emph{Twinning} 
		For similarity spaces twinning does not hold in general for take $b \in S$ such that $b$ is a duplicate wrt $X \subset S$. Then in general it is true that $E_s(X \cup \{ b \}) \geq E_s(X)$. But it can happen that $E_s(X \cup \{ b \}) > E_s(X)$. 
		For metric spaces twinning holds vacuously (unless bags are allowed). For metric as well as similarity spaces adding an appropriate 'almost' twin will for $C > 0$  may lead to   $E_s > C$, that is, it can happen that adding an element which is very close to an element in $X$ will have the effect that $E_s(X \cup \{ b\} >> E_s(X)$.  (The element $b \in S$ is an 'almost twin' if $\exists x \in X $ with $d(b, x) << 1$.)  
		\item Note that $E_s$ has monotonicity in varieties (though $E_s$ gets worse if you add a point). Note that $-E_s$ has the "opposite" of monotonicity in varieties: $-E_s(A\cup \{ b\}) \leq -E_s(A)$. 
		\item Complexity of the computation of the individual contribution ($\mbox{indD}(b)$) of an element $b \not \in X$  is $\mathcal{O}(2n)=\mathcal{O}(n)$ for you need to compute the sum of $n$ terms of the form $(\frac{1}{d(b,x_i)})^s$ or equal to 0  where $x_i \in X, i=1,\cdots n$, and then compute the sum. Clearly this is an upper as well as a lower bound. 
		
		For the case $x \in X$ see the below.
		
		The computation of the $n$-tuple  $\mbox{allC}(X)$ has complexity $\mathcal{O}(n^2)$. Clearly $\mbox{indC}(x)$ for $x \in X$ is equal to
		$\sum_{y \in X \setminus \{x\}, d(x,y) \neq 0} (\frac{1}{d(x,y})^s$ which takes $\mathcal{O}(2(n-1)) = \mathcal{O}(n)$. Hence 
		the computation of $\mbox{allC}(X)$ has as upper bound $\mathcal{O}(n\cdot n)$. 
		
		A lower bound is also at least $\Omega(n^2)$ because for each   $\mbox{indC}(x), x \in X$ you need for each of  the given distances $d(x,y), y \in X\setminus\{ x\}$ determine whether to compute $(\frac{1}{d(x,y)})^s$ and then sum them up: it takes at least $\Omega(2n)$. Hence for $\mbox{allC}(X)$ it takes at least $\Omega(n\cdot 2n) = \Omega(n^2)$
		\item \emph{Submodularity} does not hold which has been shown in \cite{Falcon-Guillermo2024RieszEnergy}. 
		
		This statement is immediately clear in light of the second characterization of submodularity (Lemma \ref{lemma:submodular}). As the similarity space is non-trivial we can find two disjoint subsets $T_1, T_2$ of $S$ which are non-empty such that there exists a pair of points $t_1 \in T_1$ and $t_2 \in T_2$ with $d(t_1, t_2) > 0$. For these sets the following strict inequality holds:
		$$ E_s(T_1) + E_s(T_2) \color{red} < \color{black} E_s(T_1 \cup T_2) + E_s(\emptyset) $$ 
		as the distances between the two sets are not accounted for in the left hand side of the inequality. 
		
		
		In Proposition \ref{proposition:Es-not-submodular}   we will stick to non-trivial similarity spaces (when the similarity space is a metric space its cardinality will necessarily be bigger than one) 
		for establishing the relevant inequality and characterize when the inequality is an equality. 
		
		To simplify notation we introduce the following notation: 
		Let $A$ and $B$ be subsets of $S$,then $(d(A,B))^{-s} := \{(\frac{1}{d(a,b)})^{-s} \, | \, a \in A, b \in B, d(a,b) \neq 0 \}$ -- in words the s-energy of all non-zero distances from $A$ to $B$. It will have  
		trivial properties such as $(d(A_1 \cup A_2, B ))^{-s} = (d(A_1, B ))^{-s} + (d(A_2, B ))^{-s}$ in case $A_1$ and $A_2$ are disjoint sets and $(d(A,B))^{-s} = (d(B,A))^{-s}$ .
		
		\begin{proposition}
			\label{proposition:Es-not-submodular}
			Let $S$ be a finite similarity space and all its non-diagonal elements of its distance matrix be bigger than zero. 
			Then $\forall T_1 \in \mathcal{P}(S)$ and $\forall T_2 \in \mathcal{P}(S)$ we have 
			$$E_s(T_1) + E_s(T_2) \leq E_s(T_1\cup T_2) + E_s(T_1 \cap T_2),$$
			where equality holds iff $T_1 \subseteq T_2$ or $T_2 \subseteq T_1$. 
		\end{proposition}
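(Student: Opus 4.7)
The plan is to prove the inequality by a direct disjoint decomposition of the two sets and exploiting the bilinearity-type property of the cross-energy functional $(d(\cdot,\cdot))^{-s}$ introduced just above the proposition. Specifically, I would write $A := T_1 \setminus T_2$, $B := T_2 \setminus T_1$, and $C := T_1 \cap T_2$, so that $A, B, C$ are pairwise disjoint, $T_1 = A \cup C$, $T_2 = B \cup C$, and $T_1 \cup T_2 = A \cup B \cup C$, while $T_1 \cap T_2 = C$. The point of this decomposition is that every energy appearing in the inequality can be expressed as a sum of terms of the form $(d(X,Y))^{-s}$ for $X, Y \in \{A, B, C\}$, and most of them will cancel.

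Using the stated identity $(d(A_1 \cup A_2, B))^{-s} = (d(A_1, B))^{-s} + (d(A_2, B))^{-s}$ for disjoint $A_1, A_2$, together with the symmetry $(d(X,Y))^{-s} = (d(Y,X))^{-s}$, I expand each of the four quantities:
\begin{align*}
E_s(T_1) &= (d(A,A))^{-s} + (d(C,C))^{-s} + 2(d(A,C))^{-s}, \\
E_s(T_2) &= (d(B,B))^{-s} + (d(C,C))^{-s} + 2(d(B,C))^{-s}, \\
E_s(T_1\cup T_2) &= (d(A,A))^{-s} + (d(B,B))^{-s} + (d(C,C))^{-s} \\
 &\quad + 2(d(A,B))^{-s} + 2(d(A,C))^{-s} + 2(d(B,C))^{-s}, \\
E_s(T_1\cap T_2) &= (d(C,C))^{-s}.
\end{align*}

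Subtracting the sum of the first two from the sum of the last two, the within-block terms $(d(A,A))^{-s}$, $(d(B,B))^{-s}$, and the two copies of $(d(C,C))^{-s}$ cancel, as do the cross terms involving $C$. What remains is
\[
\bigl[E_s(T_1\cup T_2) + E_s(T_1\cap T_2)\bigr] - \bigl[E_s(T_1) + E_s(T_2)\bigr] \;=\; 2\,(d(A,B))^{-s},
\]
which is a sum of non-negative terms and hence $\geq 0$, proving the inequality.

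For the equality case, I use the hypothesis that all off-diagonal entries of the distance matrix are strictly positive. Then $(d(A,B))^{-s} = 0$ if and only if there are no pairs $(a,b)$ with $a \in A$, $b \in B$ at all, i.e.\ if and only if $A = \emptyset$ or $B = \emptyset$; this is exactly the condition $T_1 \subseteq T_2$ or $T_2 \subseteq T_1$. The proof is essentially a bookkeeping argument, so the only real obstacle is keeping the cross-terms in order and being explicit about the fact that the positivity of off-diagonal distances is precisely what makes $(d(A,B))^{-s}$ vanish only in the degenerate containment cases — without that hypothesis, the inequality still holds but the characterization of equality would need to mention the possibility of zero-distance pairs bridging $A$ and $B$.
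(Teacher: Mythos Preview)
Your approach is essentially the same as the paper's: both decompose into the disjoint pieces $A = T_1\setminus T_2$, $B = T_2\setminus T_1$, $C = T_1\cap T_2$ and track cross-energy terms to exhibit $(d(A,B))^{-s}$ as the (nonnegative) difference, with the equality case following because strictly positive off-diagonal distances force $(d(A,B))^{-s}=0$ exactly when $A$ or $B$ is empty. The paper organises this into three cases (disjoint, nested, overlapping) whereas you carry out one unified expansion; your final difference is $2\,(d(A,B))^{-s}$ versus the paper's $(d(A,B))^{-s}$, a harmless discrepancy coming from whether $E_s$ is read as a sum over ordered or unordered pairs, and it does not affect the inequality or its equality characterisation.
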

		\begin{proof}
			For the proof of this consider the following three cases (both $T_1 \neq \emptyset$ and $T_2 \neq \emptyset$):
			\begin{description}
				\item[Case I] If $T_1 \cap T_2 = \emptyset$, then $E_s(T_1) + E_s(T_2) < E_s(T_1\cup T_2) + E_s(\emptyset)$, 
				\item[Case II] If $T_1 \subseteq T_2$, then $E_s(T_1)+E_s(T_2) = E_s(T_1\cup T_2) + E_s(T_1\cap T_2)$, as $T_1\cup T_2 = T_2$ and $T_1 \cap T_2 = T_1$. 
				\item[Case III] $T_1\setminus T_2 \neq \emptyset$ and $T_2 \setminus T_1 \neq \emptyset$ and $T_1\cap T_2 \neq \emptyset$.\\ In this case we get:
				$$ E_s(T_1) + E_s(T_2) \color{red} < \color{black} E_s(T_1 \cup T_2) + E_s(T_1\cap T_2 )$$ 
				for:
				$E_s(T_1\setminus T_2) + E_s(T_2) \color{red} < \color{black} E_s(T_1 \cup T_2)$ as $T_1\setminus T_2$ and $T_2$ are disjoint and both non-empty.
				
				We now ask the question which  distances you still need to get the $s$-energy of $T_1\cup T_2$ and which distances do you need in order to get $E_s(T_1)$.
				In order to get from $E_s(T_1\setminus T_2) + E_s(T_2)$ to $E_s(T_1\cup T_2)$ you need to apply the s-energy formula to
				a) the distances from points in $T_1\setminus T_2 $ to points in $T_1 \cap T_2)$ and 
				b) the distances from points in  $T_1\setminus T_2$ to points in $T_2\setminus T_1$. Thus
				$$E_s(T_1\setminus T_2) + 
				(d(T_1\setminus T_2, T_1 \cap T_2))^{-s} + 
				(d(T_1 \setminus T_2, T_2\setminus T_1))^{-s} \color{red} = \color{black}
				$$ $$\color{red} = \color{black} E_s(T_1\cup T_2)$$
				Next we see that $E_s(T_1)$ consists of 
				$$E_s(T_1\setminus T_2) + 
				(d(T_1\setminus T_2, T_1 \cap T_2))^{-s}
				+ E_s(T_1\cap T_2) \color{red} = \color{black} E_s(T_1). $$
				From this we get:
				$$ (E_s(T_1 \cup T_2) + E_s(T_1 \cap T_2)) \color{red} - \color{black} (E_s(T_1)+E_s(T_2)) = (d(T_1\setminus T_2, T_2 \setminus T_1))^{-s}. $$ 
				Hence:
				$ E_s(T_1) + E_s(T_2) \, \color{red} < \color{black} E_s(T_1\cup T_2) + E_s(T_1 \cap T_2) $
				as   $(d(T_1\setminus T_2, T_2 \setminus T_1))^{-s} > 0$
				$\square$
			\end{description} 
		\end{proof} 
		\begin{corollary}
			The indicator $-E_s$ is submodular (confirming a result in \cite{Falcon-Guillermo2024RieszEnergy}) and the inequality is a strict inequality unless the subsets involved are comparable with respect to set inclusion. 
		\end{corollary}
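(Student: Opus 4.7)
The plan is to obtain this corollary as an essentially immediate consequence of Proposition \ref{proposition:Es-not-submodular}, which already does all the real work. The proposition establishes that for any $T_1, T_2 \in \mathcal{P}(S)$,
\[
E_s(T_1) + E_s(T_2) \,\leq\, E_s(T_1 \cup T_2) + E_s(T_1 \cap T_2),
\]
with equality precisely when $T_1 \subseteq T_2$ or $T_2 \subseteq T_1$. I would multiply this inequality through by $-1$, reversing the direction, to obtain
\[
(-E_s)(T_1) + (-E_s)(T_2) \,\geq\, (-E_s)(T_1 \cup T_2) + (-E_s)(T_1 \cap T_2),
\]
and then observe that this is exactly the second characterization of submodularity listed in Lemma \ref{lemma:submodular}, applied to the set function $-E_s$. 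Since that lemma shows the three characterizations are equivalent, $-E_s$ is submodular in the sense of any of them.

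For the strictness claim, I would appeal to the ``iff'' part of the proposition: whenever neither $T_1 \subseteq T_2$ nor $T_2 \subseteq T_1$ holds, the proposition gives strict inequality, and strictness is preserved under negation, so $-E_s$ satisfies the strict version of submodularity in exactly those cases. Conversely, if the two subsets are comparable under inclusion, both sides of the negated inequality reduce to $-E_s(T_1) + (-E_s)(T_2)$ (since $T_1 \cup T_2$ and $T_1 \cap T_2$ are simply $T_1$ and $T_2$ in some order), yielding equality.

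I would finish with a brief sanity check on the boundary case where one of $T_1, T_2$ is empty, since the proof of the proposition explicitly assumed both to be nonempty. If, say, $T_1 = \emptyset$, then $T_1 \cap T_2 = \emptyset$ and $T_1 \cup T_2 = T_2$, and both sides of the submodularity inequality reduce to $(-E_s)(T_2)$; this is consistent with the ``comparable under inclusion'' equality case, since $\emptyset \subseteq T_2$. There is no genuine obstacle in this argument: the content lives entirely in Proposition \ref{proposition:Es-not-submodular}, and the corollary is a one-line negation plus a verification that the equality characterization is preserved.
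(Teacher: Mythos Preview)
Your proposal is correct and matches the paper's approach: the corollary is stated immediately after Proposition~\ref{proposition:Es-not-submodular} with no separate proof, so the paper too treats it as the one-line negation of that proposition combined with Lemma~\ref{lemma:submodular}. Your explicit handling of the empty-set boundary case is a welcome addition, since the proposition's proof assumed both $T_1$ and $T_2$ nonempty while its statement quantifies over all of $\mathcal{P}(S)$.
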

		\begin{corollary}
			For finite subsets $T_1, T_2$ of $S$ the formula 
			$$ (E_s(T_1 \cup T_2) + E_s(T_1 \cap T_2)) \color{red} - \color{black} (E_s(T_1)+E_s(T_2)) = (d(T_1\setminus T_2, T_2 \setminus T_1))^{-s}$$ is also correct for Cases I and II ($T_1 \mbox{ and } T_2$ disjoint or $T_1 \subseteq T_2$ or $T_2 \subseteq T_1$.) (We assume that $E_s(\emptyset) := 0$ and (if $A=\emptyset$ or $B=\emptyset$, then  $(d(A, B))^{-s} := 0$)) 
		\end{corollary}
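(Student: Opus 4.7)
The plan is to verify the identity case by case, exploiting the stated conventions $E_s(\emptyset)=0$ and $(d(A,B))^{-s}=0$ whenever $A$ or $B$ is empty, and reusing the decomposition argument already used in Case III of the proposition.

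First I would handle Case I, where $T_1 \cap T_2 = \emptyset$. Here $T_1\setminus T_2 = T_1$ and $T_2 \setminus T_1 = T_2$, so the right-hand side reduces to $(d(T_1,T_2))^{-s}$, while $E_s(T_1\cap T_2) = E_s(\emptyset) = 0$. The claim then reads
\[
E_s(T_1 \cup T_2) - E_s(T_1) - E_s(T_2) = (d(T_1,T_2))^{-s},
\]
which is immediate from the definition of $E_s$: since $T_1$ and $T_2$ partition $T_1\cup T_2$, every unordered pair of points in $T_1\cup T_2$ either lies entirely in $T_1$, entirely in $T_2$, or is a cross pair contributing to $(d(T_1,T_2))^{-s}$.

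Next I would dispatch Case II. If $T_1 \subseteq T_2$ then $T_1\cup T_2 = T_2$ and $T_1 \cap T_2 = T_1$, so the left-hand side telescopes to $0$. Simultaneously $T_1\setminus T_2 = \emptyset$, so by the stated convention the right-hand side is $0$ as well. The case $T_2 \subseteq T_1$ is symmetric.

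There is essentially no obstacle here; the work is bookkeeping and making sure the conventions for the empty set align with the algebraic identity. The only mild subtlety is that in the original Case III proof the sets $T_1\setminus T_2$, $T_2\setminus T_1$, and $T_1\cap T_2$ were all non-empty, so one must check that the proposed formula continues to hold when one of these pieces degenerates to the empty set — and the conventions $E_s(\emptyset) = 0$ and $(d(\emptyset,\cdot))^{-s} = 0$ are precisely what is needed to make the identity uniform across all three cases.
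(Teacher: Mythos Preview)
Your proposal is correct and is exactly the verification the paper has in mind: the corollary is stated without a separate proof, relying on precisely the case-by-case check you give, with the conventions $E_s(\emptyset)=0$ and $(d(\emptyset,\cdot))^{-s}=0$ making the Case~III identity degenerate correctly in Cases~I and~II.
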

		\begin{corollary}
			if $ Y \subsetneqq Z \subseteq S$ and $x \in S \setminus Z$, then \\
			$E_s(Y\cup \{x \}) - E_s(Y)\,  \mathbf{<} \, \color{black} E_s(Z \cup \{x \}) - E_s(Z)$. (If $Y=\emptyset$, then the inequality also holds, as $E_s(\{x\})=0$ and $E_s(\emptyset)=0$. We get $0 < E_s(Z \cup \{ x\}) - E_s(Z).)$
		\end{corollary}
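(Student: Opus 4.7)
The plan is to reduce this corollary to the explicit exchange formula stated in the preceding corollary, applied to a carefully chosen pair of sets. Specifically, I would set $T_1 := Y \cup \{x\}$ and $T_2 := Z$. Since $Y \subseteq Z$ and $x \notin Z$, the elementary set-theoretic identities give $T_1 \cup T_2 = Z \cup \{x\}$, $T_1 \cap T_2 = Y$, $T_1 \setminus T_2 = \{x\}$, and $T_2 \setminus T_1 = Z \setminus Y$. These identifications are the only real bookkeeping in the argument.

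Next I would substitute into the formula
\[
(E_s(T_1 \cup T_2) + E_s(T_1 \cap T_2)) - (E_s(T_1)+E_s(T_2)) = (d(T_1\setminus T_2, T_2 \setminus T_1))^{-s}
\]
from the preceding corollary, which is valid for all three cases covered there (including when $Y = \emptyset$, where the conventions $E_s(\emptyset) = 0$ and $E_s(\{x\}) = 0$ make the substitution go through). Rearranging the left-hand side gives
\[
\bigl(E_s(Z\cup\{x\}) - E_s(Z)\bigr) - \bigl(E_s(Y\cup\{x\}) - E_s(Y)\bigr) = (d(\{x\}, Z\setminus Y))^{-s}.
\]

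Finally I would show that the right-hand side is strictly positive. Since $Y \subsetneqq Z$, the set $Z\setminus Y$ is non-empty; since $x \notin Z$, for each $z \in Z\setminus Y$ we have $x \neq z$, and by the standing assumption of the proposition all off-diagonal distances are strictly positive, so each summand $(1/d(x,z))^s$ is well-defined and strictly positive. Hence the right-hand side is strictly positive, which is exactly the desired strict inequality.

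I do not expect a real obstacle: the work is essentially a substitution into an already-proven identity. The only point requiring a moment of care is the edge case $Y = \emptyset$, which the statement singles out, and here one verifies directly that $E_s(Y\cup\{x\}) - E_s(Y) = 0$ while $E_s(Z\cup\{x\}) - E_s(Z) = \sum_{z \in Z} (1/d(x,z))^s > 0$, in agreement with the exchange formula.
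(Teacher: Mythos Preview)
Your argument is correct and is precisely the intended derivation: the paper states this result as a corollary without a separate proof, and it follows immediately from the exchange formula in the preceding corollary by the very substitution $T_1 = Y\cup\{x\}$, $T_2 = Z$ that you carry out, together with the observation that neither $T_1\subseteq T_2$ nor $T_2\subseteq T_1$ holds, so the right-hand side $(d(\{x\},Z\setminus Y))^{-s}$ is strictly positive.
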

		A similar result holds for $-E_s$ with the inequality reversed to $>$,  in other words for $-E_s$ we get, when $Y$ is a subset of $Z$ the marginal gain with respect to $Y$ is always bigger or equal  than the marginal gain with respect to $Z$ and it is strictly bigger in case $Y \subsetneqq Z$ -- the latter is 'almost always' the case. This strengthens the submodularity result obtained in \cite{Falcon-Guillermo2024RieszEnergy}.
		\begin{corollary}
			The statement $ E_s(T_1) + E_s(T_2) \, \color{red} < \color{black} E_s(T_1\cup T_2) + E_s(T_1 \cap T_2) $ also holds for similarity spaces where the off-diagonal entries of the distance matrix may be zero provided  $(d(T_1\setminus T_2, T_2 \setminus T_1))^{-s} > 0$ (in other words, $\exists t_1 \in T_1\setminus T_2$ and $\exists t_2 \in T_2\setminus T_1$ such that $d(t_1, t_2) >0$).  
		\end{corollary}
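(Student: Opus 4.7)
The plan is to show that the key identity
\[
(E_s(T_1 \cup T_2) + E_s(T_1 \cap T_2)) - (E_s(T_1) + E_s(T_2)) \;=\; (d(T_1\setminus T_2, T_2 \setminus T_1))^{-s}
\]
established in Case III of Proposition~\ref{proposition:Es-not-submodular} remains valid in the broader setting where off-diagonal distances may vanish, and then to read off the strict inequality from the hypothesis $(d(T_1\setminus T_2, T_2 \setminus T_1))^{-s} > 0$. This hypothesis by itself forces $T_1\setminus T_2 \neq \emptyset$ and $T_2\setminus T_1 \neq \emptyset$, so we are automatically in an analogue of Case III (or the analogue of Case I when $T_1\cap T_2 = \emptyset$), and the degenerate Case II ($T_1\subseteq T_2$ or vice versa) is excluded by the hypothesis.

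First, I would partition $T_1 \cup T_2$ into the three pairwise disjoint pieces $A := T_1\setminus T_2$, $B := T_1 \cap T_2$, $C := T_2 \setminus T_1$, and expand each of $E_s(T_1)$, $E_s(T_2)$, $E_s(T_1\cup T_2)$, $E_s(T_1\cap T_2)$ using the bilinear-style accounting already introduced in the paper, namely that for disjoint $U,V$
\[
E_s(U\cup V) \;=\; E_s(U) + E_s(V) + (d(U,V))^{-s}.
\]
Crucially, this decomposition is still correct when some distances within or between blocks are zero, since the defining sum for $E_s$ and the notation $(d(U,V))^{-s}$ both already skip zero-distance pairs by construction. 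Thus no algebraic step in the original derivation is lost when we drop the hypothesis that all off-diagonal entries are strictly positive.

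Second, I would substitute these expansions and cancel. The terms $E_s(A)$, $E_s(B)$, $E_s(C)$, $(d(A,B))^{-s}$, and $(d(B,C))^{-s}$ appear with matching coefficients on both sides, while $(d(A,C))^{-s}$ appears only in $E_s(T_1\cup T_2)$. What remains is exactly
\[
(E_s(T_1 \cup T_2) + E_s(T_1 \cap T_2)) - (E_s(T_1) + E_s(T_2)) \;=\; (d(A,C))^{-s} \;=\; (d(T_1\setminus T_2, T_2 \setminus T_1))^{-s},
\]
and the strict inequality then follows immediately from the hypothesis that this quantity is positive.

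The only subtlety — and what I expect to be the main bookkeeping obstacle — is verifying that the cancellation really is insensitive to zero off-diagonal entries. The point to emphasize is that $E_s$ and $(d(\cdot,\cdot))^{-s}$ are defined as sums restricted to strictly positive distances, so a zero distance simply contributes nothing on either side of the identity; it does not break the accounting. Once this is made explicit, the corollary reduces to the algebraic identity above together with the assumption that at least one cross-distance between $T_1\setminus T_2$ and $T_2\setminus T_1$ is positive, which is precisely the stated hypothesis. $\square$
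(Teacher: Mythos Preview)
Your proposal is correct and follows exactly the route the paper intends: the corollary is immediate from the identity $(E_s(T_1\cup T_2)+E_s(T_1\cap T_2))-(E_s(T_1)+E_s(T_2))=(d(T_1\setminus T_2,\,T_2\setminus T_1))^{-s}$ derived in the proof of Proposition~\ref{proposition:Es-not-submodular}, together with the observation that this derivation never uses strict positivity of the off-diagonal entries because both $E_s$ and $(d(\cdot,\cdot))^{-s}$ are defined to omit zero-distance pairs. The paper states the corollary without a separate proof; your argument simply makes the implicit reasoning explicit.
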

		
		\item  The following theorem establishes a new result on the NP-Hardness of Riesz s-Energy Subset Selection.
		\begin{theorem} \label{thm:riesz-energy-np-for-finite-metric-spaces}
			\textbf{Riesz $s$-Energy Subset Selection is NP-hard in General Metric Spaces}
			Given a set of size $n$ in a metric space with $d_{ij}$ denoting the pairwise distances between the points in this set, the problem of finding the subset of size $k$ that minimizes the Riesz $s$-Energy is NP-hard (where $s > 0$).
		\end{theorem}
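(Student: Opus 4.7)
The plan is to reduce from the classical Independent Set problem, which is NP-hard by Karp (1972). Given an instance $(G,k)$ of Independent Set with graph $G=(V,E)$, I will construct in polynomial time a finite metric space on $|V|$ points and a threshold $\tau$ such that $G$ contains an independent set of size $k$ if and only if some $k$-subset of the constructed space has Riesz $s$-energy at most $\tau$. Since Riesz $s$-Energy Subset Selection is a minimization problem and diversity wants points far apart (i.e.\ small inverse-distance energy), an independent set (all non-edges, hence all distances large) should become the unique optimum.

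For the construction, take the points to be the vertex set $V$ itself and define $d(u,u)=0$, $d(u,v)=1$ whenever $\{u,v\}\in E$, and $d(u,v)=2$ whenever $u\neq v$ and $\{u,v\}\notin E$. The first step is to verify that $d$ is a metric: symmetry and strict positivity off the diagonal are immediate, and the triangle inequality $d(u,w)\le d(u,v)+d(v,w)$ holds because whenever $u\neq w$ the right-hand side is at least $1+1=2\ge d(u,w)$. This is the standard two-value metric trick used in geometric hardness reductions.

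Next, for any $U\subseteq V$ with $|U|=k$, let $m$ be the number of edges of $G$ with both endpoints in $U$. A direct calculation, summing over the $\binom{k}{2}$ unordered pairs in $U$, gives
\[
E_s(U) \;=\; m\cdot 1^{-s} \;+\; \Bigl(\binom{k}{2}-m\Bigr)\cdot 2^{-s} \;=\; \binom{k}{2}\,2^{-s} \;+\; m\,(1-2^{-s}).
\]
Since $s>0$ implies $1-2^{-s}>0$, the quantity $E_s(U)$ is strictly increasing in $m$ and is minimized precisely when $m=0$, i.e.\ exactly when $U$ is an independent set in $G$. Setting $\tau:=\binom{k}{2}\,2^{-s}$ therefore yields the desired equivalence: the minimum Riesz $s$-energy over $k$-subsets equals $\tau$ iff $G$ contains an independent set of size $k$, and otherwise exceeds $\tau$ by at least $1-2^{-s}$.

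The proof is essentially a short reduction with no hard technical step; the main obstacles are two minor details. First, verifying the triangle inequality for the $\{0,1,2\}$-valued distance (addressed above). Second, the threshold $\tau$ need not be rational when $s$ is irrational, but this causes no issue: the comparison ultimately reduces to checking whether $m=0$, an integer question, so the Turing reduction goes through in any standard computational model. I would close by remarking that the reduction preserves the minimization flavor, so the hardness transfers at once to the equivalent maximization formulation obtained by negating the objective (as discussed for $-E_s$ in the submodularity corollary above).
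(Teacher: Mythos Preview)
Your proof is correct and takes essentially the same approach as the paper: both build the two-valued $\{1,2\}$ metric on the vertex set of a graph, verify the triangle inequality, and observe that the Riesz $s$-energy of a $k$-subset is an affine function of the number of induced edges. The only cosmetic difference is that the paper assigns distance $2$ to edges and reduces from $k$-\textsc{Clique} (so a clique attains the energy lower bound $k(k-1)/2^s$), whereas you assign distance $2$ to non-edges and reduce from \textsc{Independent Set}; these are dual via graph complementation, and the paper itself notes this duality in its proof.
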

		A proof of this theorem can be found in the Appendix. It is also valid for Similarity Spaces.
		\item The indicators $E_s$ and $-E_s$ ($s > 0$) are isometry invariant for the distance matrices of $X$ and $\iota(X)$ are equal and the values of these indicators are solely dependent on the distance matrix. In fact, you can view these indicators as functions from matrices (if you like square matrices with diagonal entries equal to 0) with entries in $\mathbb{R}_{\geq 0}$ to $\mathbb{R}$. See also Proposition \ref{proposition:isometry-invariance} and its Corollary \ref{corollary:indicators-which-are-invariant}
		\item As the function $(\_)^{-s}: \mathbb{R}_{\geq 0} \rightarrow \mathbb{R}$
		is strictly monotone decreasing for $s$ with $s>0$ we get that $-E_s$ has monotonicity and strict monotonicity in distance. The same holds for $E_s$ except that if you increase an entry in the distance matrix $E_s$ will strictly decrease (as you want to minimize $E_s$ the diversity "gets better").
	\end{enumerate}
	\begin{proposition}\label{proposition:isometry-invariance}
		Let $f$ be a function defined on matrices with real number entries such that its domain includes square matrices of any size and whose entries are non-negative real numbers and all the diagonal elements are equal to 0. (Usually one considers functions with codomain the real numbers but other codomains are possible.) 
		A diversity indicator which takes as input the distance matrix (dm) of finite similarity (sub)spaces and outputs $f(\mbox{dm})$ is isometry invariant. 
	\end{proposition}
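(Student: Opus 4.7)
The plan is to reduce the claim to the observation that an isometry preserves the distance matrix entry-by-entry; the indicator $D$ then cannot distinguish $X$ from $\iota(X)$, since by construction it only sees this matrix.

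Concretely, I would fix an isometry $\iota:S\to S$ and a finite $X=\{x_1,\ldots,x_n\}\subseteq S$ together with some enumeration, and equip $\iota(X)$ with the induced enumeration $(\iota(x_1),\ldots,\iota(x_n))$. A direct computation using only the defining property of an isometry then gives, for each $i,j\in\{1,\ldots,n\}$,
\[
\mbox{dm}(\iota(X))_{ij} \;=\; d(\iota(x_i),\iota(x_j)) \;=\; d(x_i,x_j) \;=\; \mbox{dm}(X)_{ij}.
\]
Hence the two distance matrices are literally identical, and applying $f$ to both sides yields $D(\iota(X)) = f(\mbox{dm}(\iota(X))) = f(\mbox{dm}(X)) = D(X)$, which is precisely isometry invariance.

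There is no genuine obstacle here; the only point requiring care is a well-definedness issue. The distance matrix depends on the chosen enumeration of its underlying set, whereas $D$ is declared to be a set function, so $f$ must tacitly be invariant under simultaneous row/column permutations of its argument for $D(Y):=f(\mbox{dm}(Y))$ to be well-defined on $\mathcal{P}(S)$. A related edge case arises in general similarity spaces, where $d(x,y)=0$ does not force $x=y$ and $\iota$ need not be injective on $X$, so that $|\iota(X)|<n$ is possible. However, any coincidence $\iota(x_i)=\iota(x_j)$ with $i\ne j$ already forces $d(x_i,x_j)=0$, so $\mbox{dm}(X)$ carries a duplicated row and column, and the compatibility of $f$ with such collapses (again implicit in $D$ being well-defined on sets) restores the equality. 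With those mild compatibility assumptions on $f$ granted, the proof is the one-line matrix computation above.
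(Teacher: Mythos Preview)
Your argument is correct and coincides with the paper's own reasoning: the paper does not supply a formal proof of this proposition, but in the surrounding discussion it simply observes that ``the distance matrices of $X$ and $\iota(X)$ are equal and the values of these indicators are solely dependent on the distance matrix,'' which is precisely your one-line computation $\mbox{dm}(\iota(X))_{ij}=d(\iota(x_i),\iota(x_j))=d(x_i,x_j)=\mbox{dm}(X)_{ij}$ followed by an application of $f$. Your additional remarks on the implicit permutation-invariance of $f$ and on possible collapses $|\iota(X)|<n$ in non-metric similarity spaces go beyond what the paper addresses and are worthwhile caveats, but they do not change the core argument.
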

	\begin{corollary}\label{corollary:indicators-which-are-invariant} 
		Examples of isometry invariant diversity indicators are the s-energy diversity indicator,  the Max-min diversity indicator, the Solow-Polasky indicator, the Sum indicator,  the Weitzman indicator, and the Gap indicators\cite{emmerich2013quality}. (The latter two we did not define.) As a rotation (or translation or reflection or any rigid motion) of Euclidean space  are isometries of Euclidean space it follows, for instance, that $E_s$ (and $-E_s$) are rotation invariant as described and analyzed in 
		\cite{Falcon-Guillermo2024RieszEnergy} or, for instance, 
		invariant under isometries of Euclidean spaces with respect to the $L^p$ norm $p \geq 1$. 
	\end{corollary}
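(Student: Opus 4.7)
The plan is to invoke Proposition \ref{proposition:isometry-invariance} for each of the indicators in the list, which reduces the task to exhibiting in each case a function $f$ defined on distance matrices whose value equals the indicator. Once this is done, the Euclidean and $L^p$ consequences follow immediately: rigid motions (rotations, translations, reflections) preserve Euclidean distance, and more generally every isometry $\iota$ of $(\mathbb{R}^n, \|\cdot\|_p)$ for $p \geq 1$ satisfies $\|\iota(x)-\iota(y)\|_p = \|x-y\|_p$, so $\mbox{dm}(X) = \mbox{dm}(\iota(X))$ for any finite subset $X$.

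First I would dispatch the indicators whose defining formulas are manifestly distance-matrix based. For $E_s$ the required function is $f(\mbox{dm}) = \sum_{i \neq j,\, \mbox{dm}_{ij} \neq 0} \mbox{dm}_{ij}^{-s}$, with $-E_s$ handled by negation. For Max-min one takes $f(\mbox{dm}) = \min_{i \neq j} \mbox{dm}_{ij}$, and for the Sum indicator $f(\mbox{dm}) = \sum_{i<j} \mbox{dm}_{ij}$. The Weitzman and Gap indicators are, by their standard definitions, built from the pairwise distances alone, so the same argument applies once those definitions are unfolded; in every case the value is read off from the entries of $\mbox{dm}(X)$ without any further reference to the ambient space.

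The least immediate case, and the main potential obstacle, is Solow--Polasky, since magnitude is defined through weightings rather than via a closed formula. Here I would argue in two steps: (i) the similarity matrix is a pointwise function of the distance matrix, $\mbox{sim}(X,\theta)_{ij} = \exp(-\theta\, \mbox{dm}_{ij})$; (ii) whenever magnitude is defined, its value is determined entirely by $\mbox{sim}(X,\theta)$, because the weighting equation $\mbox{sim}(X,\theta)\,w = \mathbf{1}$ and the resulting component sum $\sum_i w_i$ depend only on the entries of that matrix. The excerpt has already observed that this sum is invariant under the choice of weighting when more than one exists, so the possible non-invertibility of $\mbox{sim}(X,\theta)$ does not spoil the argument. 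Composing (i) and (ii) produces the required function of $\mbox{dm}$, and with that all six indicators fit the hypothesis of Proposition \ref{proposition:isometry-invariance}, from which the corollary follows.
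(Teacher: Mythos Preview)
Your proposal is correct and is precisely the argument the paper intends: the corollary is stated without proof, immediately after Proposition~\ref{proposition:isometry-invariance}, and the surrounding text makes clear that each listed indicator is isometry invariant simply because its value is computed from the distance matrix alone. Your treatment is in fact more explicit than the paper's (particularly the two-step handling of Solow--Polasky via the similarity matrix and the weighting equation), but the route is identical.
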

	
	\begin{remark}
		The concept of isometry invariance is  
		a generalization of the
		concept of rotation invariance as described and analyzed  in \cite{Falcon-Guillermo2024RieszEnergy}. 
		Rotation invariance for Euclidean spaces (described and analyzed in \cite{Falcon-Guillermo2024RieszEnergy} is a special case of  isometry invariance in metric spaces (or similarity spaces).
		This means that isometry invariance implies rotation invariance. 
		The approach in this paper shows the invariance without any computation, also for the rotation invariance of $E_s$ and $-E_s$.  Moreover Proposition \ref{proposition:isometry-invariance} shows that all the indicators mentioned in Corollary \ref{corollary:indicators-which-are-invariant}\, and for $-E_s$ and $E_s$, among others,  are rotation invariant as described in \cite{Falcon-Guillermo2024RieszEnergy} and for any rigid motion of Euclidean space. 
	\end{remark}
	\subsubsection{Theoretical properties of the $D_{\mbox{max-min}}$ indicator}

	\begin{enumerate}
		\item The individual contribution  computation 
		$$(\mbox{indC}(\tilde{x})) := D_{\mbox{max-min}}(X\cup \{ \tilde{x} \} -D_{\mbox{max-min}}(X\setminus  \{ \tilde{x} \})  $$ is $\mathcal{O}(n^2)$ -- this is also a lower bound. The enumeration of all values in the similarity matrix gives rise to the upper bound \( O(n^2) \). 
		\begin{theorem} \label{thm:maxminsubsetselection}
			An upper bound for calculating all single-point contributions of $X$,  $\mbox{allC}(X)$,  of the Max-min diversity is $O(n^3)$, and $O(n^2)$ when excluding specific degenerate distance matrices with duplicate pairwise distances.
		\end{theorem}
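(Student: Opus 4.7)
The plan is to establish both bounds by starting from a single $O(n^2)$ computation of the distance matrix $\mbox{dm}(X)$. For the general $O(n^3)$ bound I would, for each $x_i \in X$, compute $D_{\mbox{max-min}}(X \setminus \{x_i\}) = \min\{d(x_j,x_k) : j \neq i,\, k \neq i,\, j \neq k\}$ by scanning the submatrix obtained by deleting the $i$th row and column. Each scan costs $O(n^2)$ and the difference $D_{\mbox{max-min}}(X) - D_{\mbox{max-min}}(X \setminus \{x_i\})$ immediately gives $\mbox{indD}(x_i)$, so producing the whole $n$-tuple $\mbox{allC}(X)$ takes $O(n \cdot n^2) = O(n^3)$.

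For the sharper $O(n^2)$ bound in the non-degenerate regime I would exploit the fact that removing most points leaves the overall minimum unchanged. Let $m := D_{\mbox{max-min}}(X)$ and let $E_m$ be the set of unordered index pairs $\{i,j\}$ with $d(x_i,x_j) = m$; both are obtained in one $O(n^2)$ sweep of $\mbox{dm}(X)$. Call $x \in X$ \emph{critical} if $D_{\mbox{max-min}}(X \setminus \{x\}) \neq m$; equivalently, $x$ is critical iff every pair in $E_m$ is incident to $x$. Under the non-degeneracy hypothesis that no two entries of $\mbox{dm}(X)$ coincide at the minimum level we have $|E_m| = 1$, so only the two endpoints of the unique minimizing pair $(a,b)$ are critical. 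For every other $x$ the contribution is $\mbox{indD}(x) = m - m = 0$ and requires no further work; for each of $a$ and $b$ a direct $O(n^2)$ scan of the remaining submatrix gives $D_{\mbox{max-min}}(X \setminus \{x\})$. The total cost is $O(n^2) + 2 \cdot O(n^2) = O(n^2)$.

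The delicate step, and the reason for the degeneracy caveat in the theorem, is the handling of ties in the distance matrix. When $|E_m| \geq 2$ the set of critical points depends on the incidence structure of the graph induced by $E_m$: two disjoint minimum edges or a triangle yield no critical vertex at all, a star of minimum edges yields a single critical centre, and an isolated minimum edge yields two. Each of these first-level sub-cases can still be dispatched within $O(n^2)$ using the same "recompute only for critical vertices" scheme, so the main obstacle is not the top tied level but potential cascades: after deleting a critical vertex the new minimum may itself be attained by many pairs, and repeatedly resolving these deeper ties can in the worst case force a fresh $O(n^2)$ scan at each level, restoring the $O(n^3)$ bound. Isolating exactly which duplicate-distance patterns trigger such cascades, and verifying that a hypothesis of pairwise-distinct entries in $\mbox{dm}(X)$ rules them out, is the step I expect to require the most care in the full proof.
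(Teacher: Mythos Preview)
Your arguments for both stated bounds are correct and follow the same strategy as the paper: compute the distance matrix and the minimum $m$, collect the set $E_m$ of pairs realising $m$, note that only vertices incident to \emph{every} edge of $E_m$ can have non-zero contribution, and recompute the minimum over the reduced submatrix for those few critical vertices. Your naive per-vertex $O(n^2)$ scan gives a cleaner route to the general $O(n^3)$ upper bound than the paper's; there the $O(n^3)$ instead arises from a subroutine that detects the Case-1 configuration (a minimum-distance triangle) by iterating over all pairs in $E_m$, of which there may be $O(n^2)$, and testing each in $O(n)$.

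Your final paragraph, however, misidentifies the reason for the degeneracy caveat. There are no ``cascades'': $\mbox{indD}(x)$ requires only the single number $D_{\mbox{max-min}}(X\setminus\{x\})$, which is one flat $O(n^2)$ minimum over the remaining submatrix, irrespective of how many ties that new minimum exhibits---no further deletions are ever performed. In the paper's proof the $O(n^3)$ comes purely from the cost of \emph{deciding} which of the three incidence patterns of $E_m$ (triangle/disjoint pairs, star, single edge) obtains; once decided, each case is resolved in $O(n^2)$ exactly as you sketch. Indeed, your own observation that there are always at most two critical vertices, identifiable from $E_m$ in $O(|E_m|)\le O(n^2)$ time, already yields $O(n^2)$ even in the presence of ties; so the cascade worry both misattributes the paper's caveat and hides that your outline in fact supports a bound stronger than the theorem claims.
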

		A proof of this theorem can be found in the Appendix. It is also valid for Similarity Spaces. 
		\item Max-min Diversity satisfies twinning: suppose $b$ is a duplicate with respect $X$ then in the distance matrix of $X\cup {b}$ no "new" distances will appear, thus the minimum of the off-diagonal entries of $\mbox{dm}(X)$ and of $\mbox{dm}(X\cup {b})$ is the same. 
		\item Submodularity for Max-min Diversity does not hold: (use $f$ is submodular iff $\forall T_1, T_2$ finite: $f(T_1) + f(T_2) \geq f(T_1 \cup T_2) + f(T_1 \cap T_2)$: see for a counterexample Figure \ref{fig:counterexample-Dmaxmin-submodular}, also $-D_{\mbox{max-min}}$ is not submodular.
		\item The indicator $D_{\mbox{max-min}}$ has monotonicity in distance but does not have strict monotonicity in distance. It is clear that by enlarging distances between points the value of this indicator will not be reduced but it can happen that increasing distances will not increase the value of this indicator: see for a counterexample Figure \ref{fig:counterintuitive}.   
		\item The Max-min Diversity is isometry invariant (see Proposition \ref{proposition:isometry-invariance}).
		\item The subset selection for the Max-min Diversity indicator is known to be NP-hard\cite{agarwal2006computing}.
		\begin{figure}[ht]
			\centering
			\begin{minipage}{0.5\textwidth} 
				\centering
				\begin{tikzpicture}[scale=0.5]
					\foreach \x in {0,8}{
						\draw[thick, blue] (\x,0) -- (\x,4);
					};
					\foreach \y in {0,4}{
						\draw[thin, blue, dashed] (0,\y) -- (8,\y);
					};
					\draw[thick, blue] (0,2) -- (8,2);
					\draw[thin, blue, dashed] (0,0) -- (8,2);
					\draw[thin, blue, dashed] (0,0) -- (8,4);
					\draw[thin, blue, dashed] (0,2) -- (8,0);
					\draw[thin, blue, dashed] (0,2) -- (8,4);
					\draw[thin, blue, dashed] (0,4) -- (8,0);
					\draw[thin, blue, dashed] (0,4) -- (8,2);
					
					\foreach \y/\name in {0/a, 2/b, 4/c} {
						\fill[red] (0,\y) circle (2pt);
						\node[left] at (0,\y){\textbf{\name}}; 
					}; 
					\foreach \y/\name in {0/d, 2/e, 4/f} {
						\fill[black] (8,\y) circle (2pt);
						\node[right] at (8,\y){\textbf{\name}}; 
					};
					\draw[thick, black] (0,2) circle (3pt);
					\draw[thick, red] (8,2) circle (3pt);
					
					\foreach \y/\name in {1/1, 3/1} {
						\node[left] at (0,\y){\textbf{\name}}; 
						\node[right] at (8,\y){\textbf{\name}};
					};
					\node[above] at (4,4){\textbf{4}};
					\node[above] at (4,0){\textbf{4}};
					\node[above] at (7.0, 2.3){\textbf{$\sqrt{17}$}};
					\node[above] at (7.0, 0.5) {\textbf{$\sqrt{20}$}};
				\end{tikzpicture}
			\end{minipage}%
			\hfill
			\begin{minipage}{0.45\textwidth} 
				\centering
				{\footnotesize
					\begin{tabular}{|l|l|l|l|l|l|l|}
						\hline
						& a & b & c & d & e & f \\
						\hline
						a & 0 & 1 & 2 & 4 & $\sqrt{17}$ & $\sqrt{20}$ \\ 
						\hline
						b &  & 0 & 1 & $\sqrt{17}$ & 4 & $\sqrt{17}$ \\ 
						\hline
						c &  &   & 0 & $\sqrt{20}$ & $\sqrt{17}$  & 4 \\ 
						\hline
						d &  &   &  & 0 & 1  & 2 \\ 
						\hline
						e &  &   &  &  & 0  & 1 \\ 
						\hline
						f &  &   &  &  &    & 0 \\ 
						\hline
					\end{tabular}
				}
			\end{minipage}
			\caption{Counterexample: The $D_{\mbox{max-min}}$ Diversity Indicator is not Submodular. Let $X = \{a,b,c,d,e,f \}$. 
				$S = \{ a,b,c, e \}$ (the red filled circles or red circle) and $T = \{ d,e,f, b \}$ (the black filled circles and the black circle). Then $D_{\mbox{max-min}}(S) + D_{\mbox{max-min}}(T) < D_{\mbox{max-min}}(S\cup T) +D_{\mbox{max-min}}(S \cap T)$, 
				for $D_{\mbox{max-min}}(S) = 1, D_{\mbox{max-min}}(T) = 1, D_{\mbox{max-min}}(S\cup T) = 1, \mbox{ and } D_{\mbox{max-min}}(S\cap T) = 4$ ( see the Distance Matrix of $X$ in Table on the right).  
				Notice that you can create counterexamples for any finite sizes of the sets involved. Also you can create counterexamples which have an essential Euclidean dimension of arbitrary largeness, that is, $X \subset \mathbb{R}^s$ such that the convex hull of $X$ has dimension $s$. Notice also that you can create counterexamples which will show that $-D_{\mbox{max-min}}$ is not submodular.}
			\label{fig:counterexample-Dmaxmin-submodular}
		\end{figure}
	\end{enumerate}
	%
	
	\subsubsection{Theoretical Properties the Solow Polasky indicator}
	
	\begin{enumerate}
		\item Preliminary remark. The $\mbox{sim}(X, \theta)$ is a strictly positive definite matrix for all $\theta > 0$ in case $S \subseteq \mathbb{R}^n$ and $X$ is a finite subset of $S$ and also ultra metric spaces are positive definite (see \cite{leinster2013magnitudeOfMetricSpaces} ). Also for any metric space of size 2 the $\mbox{sim}$ is strictly positive definite (either use the Sylvester characterization (and the off-diagonal elements are in (0,1) ) or straightforward: show for all $v \neq 0$ and for all $\theta > 0$:  $v^{\top}\mbox{sim}\,v > 0$ using completing of the square. For metric spaces of size 3 the same holds for $\theta >0$ using Sylvester characterization and using the triangle inequality. Metric spaces of size 4 are also positive definite as is proved by \cite{meckes2013positiveDefiniteMetricSpaces}. Spaces of size 5 are not necessarily postive definite \cite{leinster2013magnitudeOfMetricSpaces}. 
		\item The indicator SP has the twinning property. 
		Let $b$ be a duplicate for $X$ and assume $X$ has magnitude then $X \cup \{b\}$ has magnitude and the magnitudes are equal. 
		For let 
		$$
		\mbox{sim}(X, \theta) =
		\begin{bmatrix}
			\color{blue}1 & \color{blue}e_{1,2} & \color{blue}\cdots & \color{blue}e_{1, n-1} & \color{blue}e_{1, n}\\
			\color{blue}e_{2,1} & \color{blue}1 & \color{blue}\cdots & \color{blue}e_{2, n-1} & \color{blue}e_{2, n} \\
			\color{blue}\vdots & \color{blue}\color{blue}\color{blue}\vdots&\color{blue} \ddots & \color{blue}\vdots &\color{blue} \vdots \\
			\color{blue}e_{n,1} & \color{blue}e_{n,2} & \color{blue}\cdots  & \color{blue}e_{n,n-1} &\color{blue}1
		\end{bmatrix}, 
		$$
		be the symmetric similarity matrix, where $e_{i,j} := \exp(-\theta \cdot d_{i,j})$ and $d_{i, j}$ are the distance entries of $\mbox{dm}(X)$. And let $[w_1,\cdots, w_n]^{\top}$ be a weighting of  $\mbox{sim}(X, \theta)$, that is  
		$$
		\mbox{sim}(X, \theta) 
		\begin{bmatrix}
			w_1 \\
			w_2 \\
			\vdots \\
			w_{n} \\
		\end{bmatrix}
		=
		\begin{bmatrix}
			\color{blue}1 & \color{blue}e_{1,2} & \color{blue}\cdots & \color{blue}e_{1, n-1} & \color{blue}e_{1, n}\\
			\color{blue}e_{2,1} & \color{blue}1 & \color{blue}\cdots & \color{blue}e_{2, n-1} & \color{blue}e_{2, n} \\
			\color{blue}\vdots & \color{blue}\color{blue}\color{blue}\vdots&\color{blue} \ddots & \color{blue}\vdots &\color{blue} \vdots \\
			\color{blue}e_{n,1} & \color{blue}e_{n,2} & \color{blue}\cdots  & \color{blue}e_{n,n-1} &\color{blue}1 \\
		\end{bmatrix}
		\begin{bmatrix}
			w_1 \\
			w_2 \\
			\vdots \\
			w_{n} \\
		\end{bmatrix}
		= 
		\begin{bmatrix}
			1 \\
			1 \\
			\vdots \\
			1 \\
		\end{bmatrix}
		$$
		As $b$ is a duplicate for $X$ we get that similarity matrix of $X\cup \{ b\}$ is as follows (assume that twin of $b$ in $X$ is represented in $\mbox{sim}(X,\theta)$ as the last column (which is equal to the transpose of the last row)).
		$$
		\mbox{sim}(X\cup \{b \}, \theta) =
		\begin{bmatrix}
			\color{blue}1 & \color{blue}e_{1,2} & \color{blue}\cdots & \color{blue}e_{1, n-1} & \color{blue}e_{1, n}& e_{1,n}\\
			\color{blue}e_{2,1} & \color{blue}1 & \color{blue}\cdots & \color{blue}e_{2, n-1} & \color{blue}e_{2, n} & e_{2,n} \\
			\color{blue}\vdots & \color{blue}\color{blue}\color{blue}\vdots&\color{blue} \ddots & \color{blue}\vdots &\color{blue} \vdots & \vdots  \\
			\color{blue}e_{n,1} & \color{blue}e_{n,2} & \color{blue}\cdots  & \color{blue}e_{n,n-1} &\color{blue}1 & 1 \\
			e_{n,1} & e_{n,2} & \cdots  & e_{n,n-1} &1 & 1 \\
		\end{bmatrix}, 
		$$
		By inspection we see that $\mbox{sim}(X\cup \{b \})$ has the following weightings ($\alpha, \beta \in \mathbb{R}$ such that $\alpha + \beta = 1$):
		
		$$
		\mbox{sim}(X\cup \{b \}, \theta) 
		\begin{bmatrix}
			w_1 \\
			w_2 \\
			\vdots \\
			\alpha \cdot w_{n} \\
			\beta \cdot w_{n}
		\end{bmatrix}
		=
		\begin{bmatrix}
			\color{blue}1 & \color{blue}e_{1,2} & \color{blue}\cdots & \color{blue}e_{1, n-1} & \color{blue}e_{1, n}& e_{1,n}\\
			\color{blue}e_{2,1} & \color{blue}1 & \color{blue}\cdots & \color{blue}e_{2, n-1} & \color{blue}e_{2, n} & e_{2,n} \\
			\color{blue}\vdots & \color{blue}\color{blue}\color{blue}\vdots&\color{blue} \ddots & \color{blue}\vdots &\color{blue} \vdots & \vdots  \\
			\color{blue}e_{n,1} & \color{blue}e_{n,2} & \color{blue}\cdots  & \color{blue}e_{n,n-1} &\color{blue}1 & 1 \\
			e_{n,1} & e_{n,2} & \cdots  & e_{n,n-1} &1 & 1 \\
		\end{bmatrix}
		\begin{bmatrix}
			w_1 \\
			w_2 \\
			\vdots \\
			\alpha \cdot w_{n} \\
			\beta \cdot w_{n}
		\end{bmatrix}
		= 
		\begin{bmatrix}
			1 \\
			1 \\
			\vdots \\
			1 \\
			1
		\end{bmatrix}
		$$
		And the sum of a weighting in the extended matrix is the same as the sum of the weighting of the original matrix. 
		\item The SP indicator is not submodular. The following example which was communicated to us by Steve Huntsman shows that it is in general not submodular: Consider the following four points in $\mathbb{R}^2$: $a=(-1,0), b=(1,0), c=(2,0), d=(0,1)$ then 
		$$( \mbox{SP}(\{a, b, d \}) - \mbox{SP}(\{a,b\}) )
		-( \mbox{SP}(\{a, b, c,  d \}) - \mbox{SP}(\{a,b,c\}) ) = \color{red}\mathbf{-}\color{black}0.0144346.$$ 
		This number would have been non-negative in case of submodularity. 
		\item Individual contribution for SP is polynomial $\mathcal{O}(n^3) \mbox{ or } \mathcal{O}(n^{2.8})$ -- or $\mathcal{O}(n)$ in case one of the terms in $\mbox{SP}(X\cup \{ x \}) - \mbox{SP}(X)$ is known. An upper bound for the computation of $\mbox{allC}(X)$ is $\mathcal{O}(n^{2.8} + n^2)= \mathcal{O}(n^{2.8})$. A lower bound is the same as the lower bound for matrix inversion. (cf. \cite{ulrich2011diversity})
	\end{enumerate}
	\begin{table}[ht] 
		\caption{Diversity Indicators and Their Properties}
		\label{table:propertiesOfDivIndicators}
		\small
		\centering
		\begin{tabular}{|l|l|l|l|l|l|l|}
			\hline
			Indicators                   & 
			\begin{sideways} Monotonicity in varieties \end{sideways} &
			\begin{sideways}Twinning\end{sideways} &
			\begin{sideways}Monotonicity in Distance \end{sideways} &
			\begin{sideways}Strict Monotonicity in Distance \end{sideways} &
			\begin{sideways}Uniformity \end{sideways} & 
			\begin{sideways}Computational Effort \end{sideways} \\
			\hline
			\hline
			Riesz s-Energy            &Y&N&Y&Y&Y&$\mathcal{O}(\frac{n(n-1)}{2})$\\ 
			\hline
			Max-min Diversity         &N&Y&Y&N&N&$\mathcal{O}(n^2)$\\ 
			\hline
			Solow-Polasky        &Y&Y&Y&N&Y&$\mathcal{O}(n^{2.8})$\\
			\hline
		\end{tabular}
		
		\vspace{0.5cm} 
		
		\begin{tabular}{|l|l|l|l|l|l|}
			\hline
			Indicators                   & 
			\begin{sideways} single point contribution \end{sideways} &
			\begin{sideways} all contributions $\mbox{allC}(X)$ \end{sideways} &
			\begin{sideways} subset selection \end{sideways} &
			\begin{sideways} submodularity \end{sideways} &
			\begin{sideways} isometry invariant \end{sideways} \\
			\hline
			\hline
			Riesz s-Energy            &$\mathcal{O}(n)$& $\mathcal{O}(n^2)$&NP hard& Y for $-E_s$ & Y\\ 
			\hline
			Max-min Diversity         &$\mathcal{O}(n^2)$&$\mathcal{O}(n^3)$&NP hard&N&Y\\ 
			\hline
			Solow-Polasky        &$\mathcal{O}(n^{2.8})$&$\mathcal{O}(n^{2.8})$&??&N&Y\\
			\hline
		\end{tabular}
		
	\end{table}
	
		
		
		
		\section{Case study}
		\label{sec:noah}
		In our paper, we took as a basis the NOAH method formulated in \cite{ulrich2011diversity}. The idea of the method is to combine optimization (minimization of objective functions) and increase diversity in the sample. The NOAH algorithm consists of three components at each iteration: conventional multi-objective optimization, lowering the so-called barrier (the values of the objective function below which solutions are considered acceptable), and optimization of diversity. For detailed information, refer to \cite{ulrich2011diversity}.
		
		As the optimization algorithm we applied the classic NSGA-II algorithm \cite{deb2002fast}. The original paper uses the SP diversity measure and one objective function for optimization. We expanded the method to two objective functions and decided to test other popular diversity indicators, Max-min diversity and s-energy, without radically changing the idea of the NOAH algorithm. All three indicators have their own subset selection procedures incorporated in the NOAH algorithm: SP as in \cite{ulrich2011diversity}, Max-min as in \cite{porumbel2011simple}, s-Energy as in \cite{Falcon-Guillermo2024RieszEnergy}. Multiobjective optimization was carried out according to the following principle. At every moment when a single objective function in the algorithm was compared with the value of the barrier $b$, in our case, the vector of objective functions was compared component-by-component with the vector $b$. At the second stage (lowering barrier $b$), we reduced one of the components of vector $b$ randomly selected. 
		
		We made a small change for the NOAH algorithm. At each iteration where diversity optimization is applied, within the algorithm, when checking whether a new offspring is suitable, we added an additional condition that the diversity does not decrease in the new generation when this descendant is added (in the case of $s-$Energy, it does not increase). Adding any new element increases the $s-$Energy of the set. Therefore, in this case, the $s-$Energy of the population was compared at the stage before the formation of this offspring and after replacing the mutated element with a potential offspring (instead of simply adding it). Since the algorithm continues to search for a suitable offspring in each generation until it finds one, and our computing power was limited, we added the maximum number of mutation attempts. If the counter reaches maximum attempts to apply the mutation to different objects in the population, but still does not meet the conditions, a new offspring is not formed. 
		
		As a sample, we used 20 points randomly selected inside the box $\{(x,y), x \in [0, box\_size], y \in [0, box\_size]\}$. We restricted the search space to the box because otherwise the ideal outcome for any diversity optimization algorithm would be an infinite distance of points from each other on the plane. 
		
		\begin{figure}[t]%
			\centering \subfloat{\includegraphics[height=4cm]{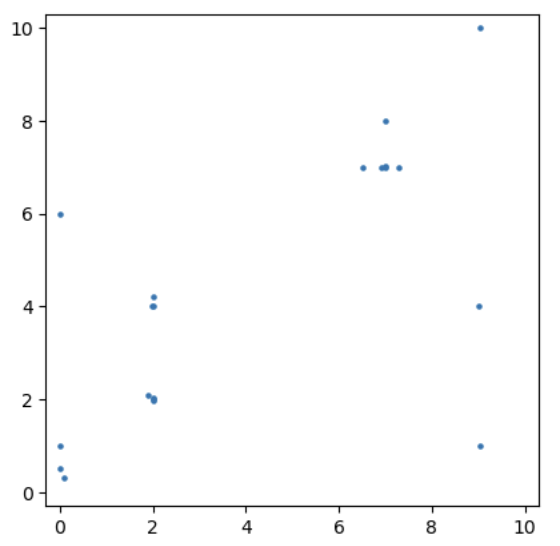} }%
			\subfloat{\includegraphics[height=4cm]{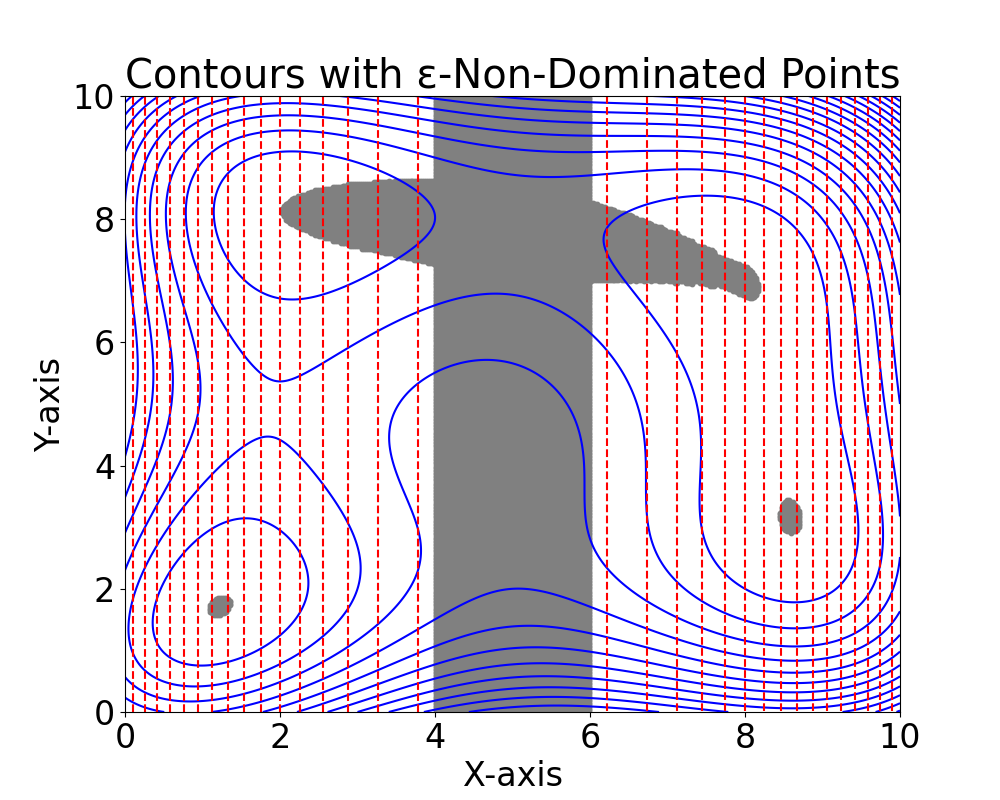}}
			\caption{\label{fig:sampleeff} Sample of 20 points distributed in the box $\{(x,y), x \in [0, 10], y \in [0, 10]\}$ (left). Obtained efficient set and contours of shifted Himmelblau and Paraboloid objective functions for the biobjective problem with $\epsilon$-dominance, $\epsilon = 1$ (right).}
		\end{figure}
		
		
		The mutation operator was used as follows: given a point in the population, we choose a random angle and move in a given direction by the the distance defined as mutation rate (hyperparameter) multiplied by a random number between 0 and 1. This ensures a uniform distribution of the jump of points on the plane inside the circle with a radius of mutation rate and the center in the mutated point. Since we want to stay inside the box in case the mutation produces a sample outside the box, we used the reflection of the point relative to the sides of the box. To keep the experimental setup simple crossover was not applied. In all experiments, we used $box\_size$ equal to 10 and mutation rate equal to 10, which provides average step of length 5 in the randomly chosen direction.
		
		The NOAH algorithm uses a stopping criterion other than a certain generation number, at which the algorithm stops running. Instead, it uses a parameter $c$ that defines after how many generations of no improvement of diversity measure we stop the diversity optimization. For the main experiment, we set $c=3$. 
		Since the main idea of the work is to compare different diversity measures with each other when applied in the NOAH algorithm, in each of the experiments we considered all three diversity measures on populations. 
		
		The multiobjective optimization problem used a shifted Himmelblau function $f_1(x, y) = ((x-5)^2 + (y-5) - 11)^2 + (x-5 + (y-5)^2 - 7)^2$, a classical two-variable problem with four local optima. We shifted it to keep its optima inside the box. The second objective function $f_2$ was a Paraboloid function $f_2(x, y) = (x-5)^2$. Both functions were minimized.
		
		The efficient set for this two-objective optimization problem consists of non-dominated points with additive $\epsilon-$ dominance. Point $(x_1, y_1)$ $\epsilon$-dominates point $(x_2, y_2)$ if $(f_1(x_1, y_1) \leq f_1(x_1, y_1) - \epsilon)\text{ and }  (f_2(x_2, y_2) \leq f_2(x_2, y_2) - \epsilon)$. The value of $\epsilon$ was taken equal to 1. We used a grid of size $100 \times 100$ to approximate and visualize the efficient set. The illustration of the efficient set in Figure \ref{fig:sampleeff} (right), reveals that with the chosen value of $\epsilon=1$, we can still observe the four optima typical of the Himmelblau function. However, the addition of the Paraboloid objective causes two of these optima to become interconnected, while also introducing a new connected region in the center of the plot. This analysis aimed to identify the algorithm's potential convergence set and to use it as a reference for calculating the Hausdorff distance, evaluating the algorithms' effectiveness in capturing all efficient regions.
		
		All three NOAH algorithm variants, each based on a different diversity indicator, exhibited similar behavior across their iterations. A typical example of the algorithm’s run is shown in Figure \ref{fig:typrun}. The NOAH optimization process alternates between convergent and explorative phases, optimizing objectives and then diversity in a cycle. Including diversity optimization helps find new efficient areas, but over time, as a result of the convergent phases, the diversity reduces. For detailed plots on all three NOAH versions, see supplementary material \cite{pereverdievacode}.
		
		\begin{figure}[t] 
			\centering \includegraphics[width=\linewidth]{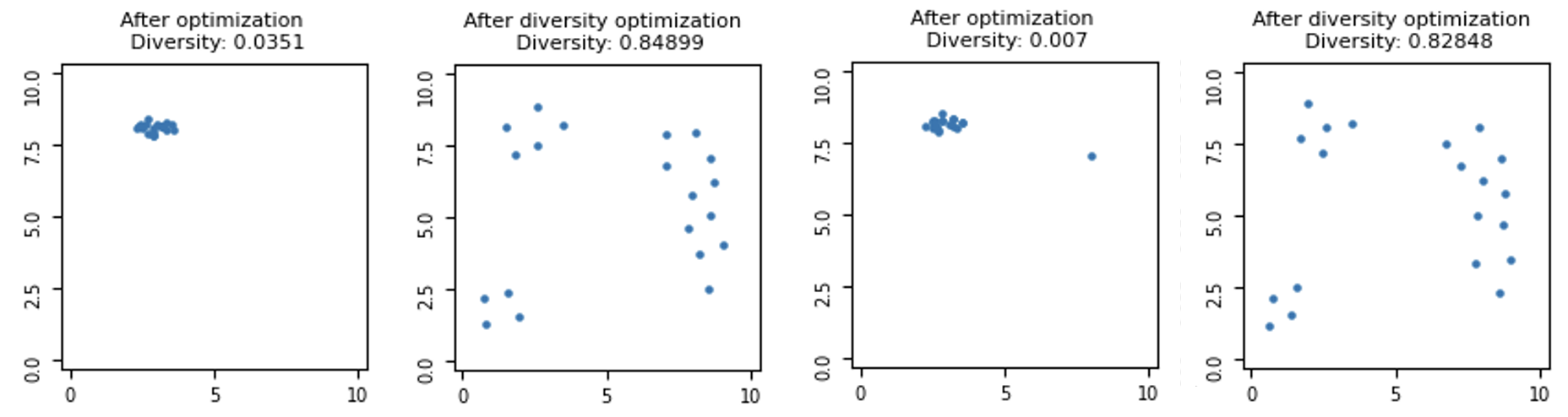} \caption{Typical run. Iterations 6 and 7 of the NOAH algorithm with the Max-min diversity indicator. From left to right: normal optimization and Max-min diversity optimization for iteration 6, followed by normal optimization and Max-min diversity optimization for iteration 7.} \label{fig:typrun} 
		\end{figure}

		For statistically reliable outcomes, each algorithm was executed 30 times, and the results were averaged across these runs. Figure \ref{fig:resal} presents the average values for each diversity measure and the objectives, with the standard error of the mean represented by dashed lines. Note that some values have been scaled for the sake of clarity.
		As mentioned earlier, all three diversity measures decline over time: Max-min and SP values decrease, while Riesz s-Energy increases. This is expected, as the algorithm primarily focuses on optimizing objectives, with the added goal of exploring the solution space to find diverse solutions during the run. 
		
		\begin{figure}[t] \centering \includegraphics[width=\linewidth]{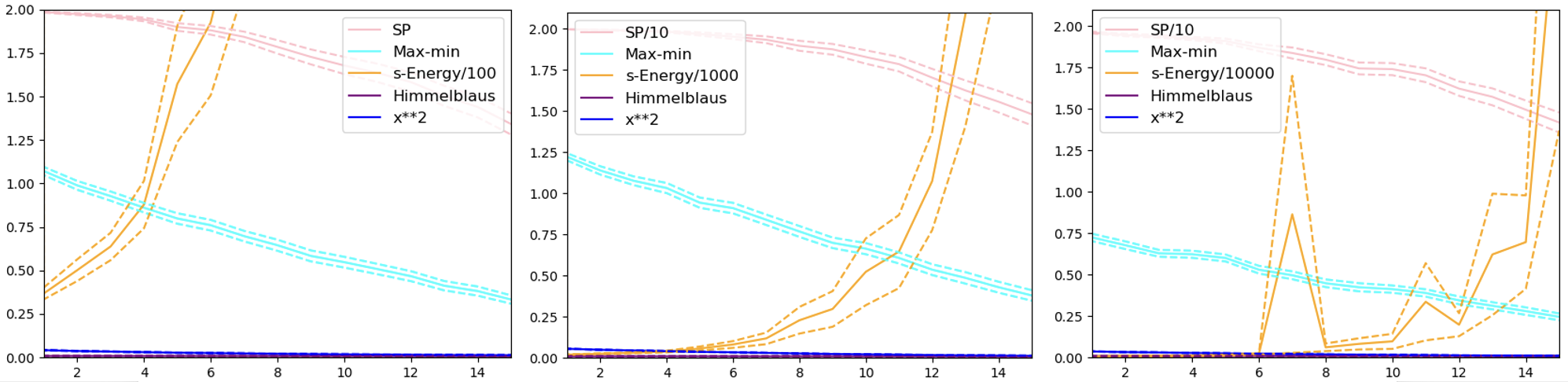} \caption{Diversity measures averaged over 30 runs for each algorithm. From left to right: NOAH with Max-min optimization, s-Energy optimization and SP optimization. Solid lines represent the mean over 30 runs, with dashed lines indicating means $\pm$ the standard error of the mean.} \label{fig:resal} \end{figure}
		
		Instead of comparing the diversity measures individually for each algorithm, we calculated each diversity measure across all three algorithms and plotted them together, as shown in Figure \ref{fig:resmeas}. Additionally, we used the efficient set from Figure \ref{fig:sampleeff} (right) as a reference to compare the sample across iterations, calculating the Hausdorff distance between the efficient set and the sample, and plotted the average distance per iteration across 30 runs.
		
		\begin{figure}[t] \centering \includegraphics[width=\linewidth]{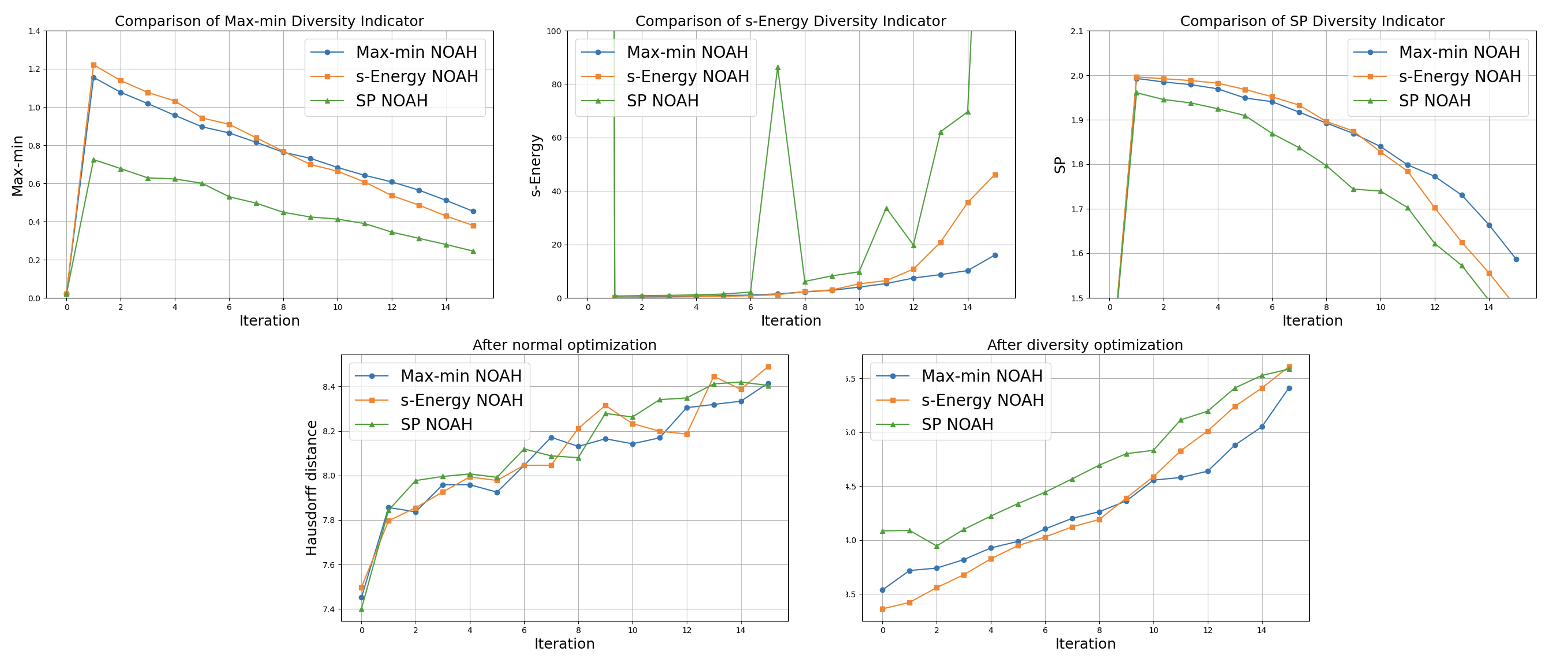} \caption{Diversity measures calculated during the execution of the three algorithms, and the Hausdorff distance calculated at two stages of each iteration: after objective optimization and after diversity optimization.} \label{fig:resmeas} \end{figure}
		Despite the fact that all three diversity measures worsen over time, Max-min optimization shows the slowest rate of deterioration across all three measures. Moreover, from Figure \ref{fig:resmeas}, we observe that the Hausdorff distance after normal objective optimization remains similar across the algorithms, suggesting that optimization negates much of the diversity's impact at round. However, differences emerge during the subsequent diversity optimization stage: Max-min and s-Energy optimizations perform better than SP optimization. From Figure \ref{fig:resmeas} it is clear that Riesz s-Energy and Max-Min Diversity are correlated.
		
		From the Figure \ref{fig:resmeas}, it is clear that despite its theoretical appeal SP optimization performed the worst, even in terms of the SP diversity measure itself. This algorithm also resulted in the largest Hausdorff distance to the reference set. Riesz s-Energy is sensitive to outliers due to its definition, and both Max-min and Riesz s-Energy optimizations managed to stabilize it across 30 runs, while SP optimization exhibited high peaks in s-energy (see Figure \ref{fig:resmeas} (upper middle)).
		
		
		In addition, in the case of Max-min diversity optimization, the following change to the algorithm was proposed. At each step of diversity optimization, when generating offspring, only elements that contribute non-zero values to the overall diversity metric should be considered for selection as parents. If such elements exist, they should be chosen with a positive probability, denoted by $prob$, and thereby influence the minimum distance in the set. To test Max-min diversity optimization, we used NOAH's diversity optimization with: 1000 repeats, b = [3, 3], c = 20, mutation rate = 10, and parent selection probability = $0.9$. For the experiment we measured the number of generations required to achieve a Max-min diversity value of 1.9 for 1000 independent runs of both algorithms. We compared the performance before and after the change using a t-test for two independent samples.
		
		In the modified algorithm, the average number of generations was 31.84 with a standard deviation of 4.44, compared to 32.24 and 4.13 for the original.
		A two-sample t-test (for independent samples) was performed to assess the statistical significance of this difference. The t-statistic was -2.065, and the corresponding p-value was 0.0391. Given that the p-value is below the conventional threshold of 0.05, the difference in the number of generations between the two algorithms is statistically significant, indicating that the modification provides a slight but meaningful improvement in efficiency.

		\section{Discussion and Conclusion}
		\label{sec:discussion_and_outlook}
		
		
		In this paper, we have compiled both well-established theoretical and computational properties, proved new theorems, and highlighted remaining open questions. A central result of our work is presented in Table \ref{table:propertiesOfDivIndicators}, which summarizes these properties. 
		In the case study, Max-min diversity and Riesz s-Energy showed a correlation and both surpassed SP diversity metric in all aspects we considered, including comparison with the reference efficient set, that is, in the ability to explore various optima of the problem Max-min not only converges well but also scales efficiently, making it a reliable choice among the three key diversity metrics. Although the SP diversity metric holds theoretical appeal, it has not shown strong performance in empirical studies. Nevertheless, it continues to be widely used as a standard in diversity optimization.
		
		The study shows NOAH optimization alternates between objectives and diversity, suggesting hyperparameter tuning to avoid over-emphasis on one side at the cost of the other, though it might harm optimization. Evolutionary Level Set Approximation \cite{emmerich2013quality} could be considered as an alternative to NOAH to improve diversity. Our analysis indicates diversity optimization benefits from focusing on diversity indicators, affecting selection and mutation in algorithms. We modified Max-min optimization for faster convergence.
		
		
		Future research should include systematic benchmarks with a range of problems and algorithms and hyperparameters. Applying these results to real-world scenarios, such as Building Spatial Design \cite{pereverdieva2023prism} and Molecular Compound Optimization \cite{bender2004molecular}, where only a dissimilarity function exists, would also be interesting. Although dissimilarity measures have been discussed in SP and Quality-Diversity Optimization \cite{steveHuntsman2023diversityDissimilarity}, broader coverage of diversity indicators is needed.
		
		\paragraph*{\bf Appendix}

		To prove Theorem \ref{thm:riesz-energy-np-for-finite-metric-spaces}, we show k-Clique polynomially reduces to Riesz $s$-Energy Subset Selection:
		\begin{definition}
			\textbf{k-Clique Problem:}
			Given an undirected graph and an integer, the k-Clique problem asks if there is a subset of k vertices where every pair is connected by an edge.
		\end{definition}
		\begin{lemma} \label{lemma:metricFromGraph}
			Let $G=(V,E)$ be an undirected graph. Let $\Delta$ denote the diagonal of $V\times V$ (that is, $\Delta = \{ (v,v)\, |\,  v \in V\}$) and let  $d: V\times V \longrightarrow \mathbb{R}_{\geq 0}$ be defined as follows.  
			$$
			d(i,j) := 
			\begin{cases}
				2, & if (i,j) \in V \times V \setminus  \Delta \mbox{ and } (i,j) \in E \\
				1, & if (i,j) \in V \times V \setminus  \Delta \mbox{ and }  (i,j) \not \in E \\
				0, & otherwise
			\end{cases}
			$$
			Then $(V, d)$ is a metric space, i.e. $d$ is a metric. 
		\end{lemma}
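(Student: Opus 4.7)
The plan is to verify the four metric axioms directly from the piecewise definition of $d$. Non-negativity is immediate, since the three branches yield values in $\{0,1,2\}$. For the identity of indiscernibles, I would observe that the ``otherwise'' branch is triggered exactly on $\Delta$, so $d(i,j)=0 \iff (i,j)\in\Delta \iff i=j$; when $i\neq j$, the distance is either $1$ or $2$ depending on whether $(i,j)\in E$.

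For symmetry, I would use that $G$ is undirected, so membership in $E$, membership in $\Delta$, and membership in $V\times V\setminus\Delta$ are each symmetric in the two coordinates; hence $d(i,j)=d(j,i)$ case by case.

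The only nontrivial axiom is the triangle inequality $d(i,k)\leq d(i,j)+d(j,k)$. I would split on how many of $i,j,k$ coincide. If $i=k$ then the left side is $0$ and the inequality holds. If $i=j$ or $j=k$ then one of the summands on the right equals $d(i,k)$ and the other is non-negative, so the inequality holds. The remaining case is that $i,j,k$ are pairwise distinct; then each of $d(i,j)$ and $d(j,k)$ lies in $\{1,2\}$, so their sum is at least $2$, while $d(i,k)\leq 2$ in any event, and the inequality follows.

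The ``hard'' part is really just recognising that by restricting the positive values of $d$ to the interval $[1,2]$, the ratio $\max/\min$ is at most $2$, which is exactly what forces the triangle inequality to hold automatically between distinct points; there is no clever step beyond the case split above. No axiom requires a structural property of $G$ other than being undirected and simple, so the lemma follows.
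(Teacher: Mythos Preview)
Your proof is correct and follows essentially the same approach as the paper, which simply asserts that $d$ is non-negative, symmetric, satisfies $d(x,y)=0\iff x=y$, and that the triangle inequality is easy to see. You have merely supplied the case split that the paper leaves implicit, so there is no substantive difference.
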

		\begin{proof} Clearly $d$ is non-negative, symmetric, and $d(x,y) = 0 \mbox{ iff } x = y$. Moreover it is easy to see that $d$ satisfies the triangle inequality. 
		\end{proof}
		
		\begin{proof} of Theorem \ref{thm:riesz-energy-np-for-finite-metric-spaces}.

			It is easy to prove this theorem  using Lemma \ref{lemma:metricFromGraph}. For the proof we assume $s > 0$. 
			Given a (finite) graph $G=(V,E)$ and $k \leq |V|$ we can determine whether this graph has a clique of size $k$ (and also provide a witness k-clique) by translating this problem in polynomial time into a Riesz k-subselection problem for the associated metric space described in Lemma \ref{lemma:metricFromGraph}. For any subset $V'$ of $V$ of size $k$ we have: 
			$$
			\frac{k(k-1)}{2^s} \leq \sum_{(i,j) \in V'\times V' \setminus \Delta} \frac{1}{(d_{i,j})^s} \leq \frac{k(k-1)}{1^s} = k (k-1),
			$$
			where the middle term is the Riesz s-energy of $V'$. Clearly, $V'$ is a k-clique iff the Riesz s-energy of $V'$ attains the lower bound (and dually: $V'$ is an independent subset of the graph iff the Riesz s-energy of $V'$ attains the upper bound). 
			The NP-complete clique problem can thus be recast in polynomial time to an instance of ESS. Note that theorem is valid $\forall s >0.$ A slight modification of the proof will show that the theorem is valid $\forall s \neq 0$. 
		\end{proof}
		
		
		\begin{proof} of Theorem \ref{thm:maxminsubsetselection}.
			
			%
			
			Firstly, we calculate the diversity of the set and the distance matrix -- both take $O(n^2)$. 
			As we define the diversity minimum among all pairwise distances, contribution of most elements will be equal to $0$. Say the smallest distance is $d_{min}$. Then, there are three cases to consider. 
			
			\textbf{Case 1}: The deletion of a single element never changes $D_{\mbox{max-min}}$ we obtain 
			$indD(a_i)=0, \ \forall i \in \{1, 2, \dots, n\}. $ This case is obtained when there are three different elements $\{a_i, a_j, a_k\}, \  i, j, k \in \{1, 2, \dots, n\}$ with $d(a_i, a_j)$ $ = d(a_j, a_k) = d(a_k, a_i) = d_{min}$, or when there are at least two non-overlapping pairs of elements with the smallest distance: $d(a_i, a_j) = d(a_k, a_m) = d_{min}$. The second step adds to the time complexity time it takes to identify this case. To check it we need to create a list of pair of elements which have distance equal to minimal distance, it takes $O(n^2)$.For each pair of points, we calculate two possible points to form a regular triangle, which takes $O(1)$. Then, checking for points with matching coordinates takes $O(n)$ at worst, leading to $O(n^3)$. Identifying two non-overlapping pairs at the minimal distance takes $O(n^2)$, given there are $O(n^2)$ possible pairs.
			
			\textbf{Case 2:} multiple elements have distance $d_{min}$ to a central element (star pattern). This central element contributes non-zero $0$, while others contribute $0$. Determining this configuration requires $O(n^2)$ as each element must have at least two others at distance $d_{min}$. The contribution of the central element is found by identifying the smallest element in the distance matrix after removing its row and column $d'_{min}$, which takes $O(n^2)$. Thus, the complexity is $O(n^2)$ and the contribution of the element is $d'_{min} - d_{min}$.
			%
			
			\textbf{Case 3}:  There is just one pair with distance equal to $d_{min}$, the contribution of all other elements is $0$. All we need is to calculate the contribution of these two elements. For both elements, we proceed as follows: delete the row and the column of matrix $M$ corresponding to the element and find the minimal element. It takes $O(n^2)$ by analogy with the previous case. 
			
			The time complexity of the procedure outlined in the proof reduces from $O(n^3)$ to  $O(n^2)$  if neither condition of case 1 is met. Point sets that are randomly drawn from a continuous space almost never satisfy these conditions, which means that Case 1 can be safely ignored, leading to a time complexity $O(n^2)$. Symbolic perturbation could be considered to avoid degenerate cases \cite{devillers2011perturbations}.
			
		\end{proof}

		\bibliographystyle{plain}

\begin{thebibliography}{10}
\providecommand{\url}[1]{\texttt{#1}}
\providecommand{\urlprefix}{URL }
\providecommand{\doi}[1]{https://doi.org/#1}

\bibitem{agarwal2006computing}
Agarwal, P.K., Overmars, M., Sharir, M.: Computing maximally separated sets in the plane. SIAM Journal on Computing  \textbf{36}(3),  815--834 (2006)

\bibitem{basto2017survey}
Basto-Fernandes, V., Yevseyeva, I., Deutz, A., Emmerich, M.: A survey of diversity oriented optimization: Problems, indicators, and algorithms. EVOLVE--A Bridge between Probability, Set Oriented Numerics and Evolutionary Computation VII pp. 3--23 (2017)

\bibitem{bender2004molecular}
Bender, A., Glen, R.C.: Molecular similarity: a key technique in molecular informatics. Organic \& biomolecular chemistry  \textbf{2}(22),  3204--3218 (2004)

\bibitem{deb2002fast}
Deb, K., Pratap, A., Agarwal, S., Meyarivan, T.: A fast and elitist multiobjective genetic algorithm: {NSGA-II}. IEEE transactions on evolutionary computation  \textbf{6}(2),  182--197 (2002)

\bibitem{deb2005omni}
Deb, K., Tiwari, S.: Omni-optimizer: A generic evolutionary algorithm for single and multi-objective optimization. European Journal of Operational Research  \textbf{185}(3),  1062--1087 (2008)

\bibitem{devillers2011perturbations}
Devillers, O., Teillaud, M.: Perturbations for delaunay and weighted delaunay 3d triangulations. Computational Geometry  \textbf{44}(3),  160--168 (2011)

\bibitem{emmerich2013quality}
Emmerich, M.T., Deutz, A.H., Kruisselbrink, J.W.: On quality indicators for black-box level set approximation. EVOLVE-A bridge between probability, set oriented numerics and evolutionary computation pp. 157--185 (2013)

\bibitem{ezendam2022partitioning}
T~Ezendam, K~G Pereverdieva, H~Hofmeyer, and M~T~M Emmerich.
\newblock An automatic partitioning method for multi-disciplinary non-orthogonal building spatial design and optimisation.
\newblock In {\em Proceedings of The Eleventh International Conference on Engineering Computational Technology, August 23-25, 2022, Montpellier, France}, pages 1--7. Civil-Comp Press, 2022.

\bibitem{Falcon-Guillermo2024RieszEnergy}
Falc{\'o}n-Cardona, J.G., Uribe, L., Rosas, P.: Riesz s-energy as a diversity indicator in evolutionary multiobjective optimization. IEEE Transactions on Evolutionary Computation p.~815 (2024). \doi{10.1109/TEVC.2024.3405197}

\bibitem{falcon2019cri}
Falc{\'o}n-Cardona, J.G., Coello~Coello, C.A., Emmerich, M.: {CRI-EMOA}: A pareto-front shape invariant evolutionary multi-objective algorithm. In: International conference on evolutionary multi-criterion optimization. pp. 307--318. Springer (2019)

\bibitem{purshouse2013generalizedDecomposition}
Giagkiozis, I., Purshouse, R.C., Fleming, P.J.: Generalized decomposition. In: International Conference on Evolutionary Multi-Criterion Optimization. pp. 428--442. Springer (2013)

\bibitem{hagg2020quality}
Hagg, A., Preuss, M., Winter, F., Rachinskiy, K., Risi, S., Wessing, S.: Quality diversity optimization: A fresh perspective on evolutionary algorithms. Genetic Programming and Evolvable Machines  \textbf{21},  103--137 (2020)

\bibitem{hardin2005minimal}
Hardin, D.P., Saff, E.B.: Minimal riesz energy point configurations for rectifiable d-dimensional manifolds. Advances in Mathematics  \textbf{193}(1),  174--204 (2005)

\bibitem{huntsman2023diversity}
Huntsman, S.: Diversity enhancement via magnitude. In: International Conference on Evolutionary Multi-Criterion Optimization. pp. 377--390. Springer (2023)

\bibitem{steveHuntsman2023diversityDissimilarity}
Huntsman, S.: Quality-diversity in dissimilarity spaces. In: GECCO '23: Proceedings of the Genetic and Evolutionary Computation Conference. pp. 1009 -- 1018. ACM (2023)

\bibitem{leinster2013magnitudeOfMetricSpaces}
Leinster, T.: The magnitude of metric spaces. Documenta Mathematica  \textbf{18},  857 -- 905 (2013)

\bibitem{tomLeinster2021entropyDiversityAxiomatic}
Leinster, T.: Entropy and Diversity: The Axiomatic Approach. Cambridge University Press (2021)

\bibitem{meckes2013positiveDefiniteMetricSpaces}
Meckes, M.W.: Positive {D}efinite {M}etric {S}paces. arXiv pp. 1--22 (2013)

\bibitem{parmee2000evolutionary}
Parmee, I.: Evolutionary search, exploration and optimisation approaches to conceptual design. International Journal of Systems Science  \textbf{31}(9),  1129--1146 (2000)

\bibitem{pereverdievacode}
Pereverdieva, K.: Code: Comparative analysis of indicators for multiobjective diversity optimization  (2024). \doi{10.5281/zenodo.13898742}

\bibitem{pereverdieva2023prism}
Pereverdieva, K., Emmerich, M., Deutz, A., Ezendam, T., B{\"a}ck, T., Hofmeyer, H.: The prism-net search space representation for multi-objective building spatial design. In: International Conference on Evolutionary Multi-Criterion Optimization. pp. 476--489. Springer (2023)

\bibitem{porumbel2011simple}
Porumbel, D.C., Hao, J.K., Glover, F.: A simple and effective algorithm for the maxmin diversity problem. Annals of Operations Research  \textbf{186},  275--293 (2011)

\bibitem{preuss2010decision}
Preuss, M., Kausch, C., Bouvy, C., Henrich, F.: Decision space diversity can be essential for solving multiobjective real-world problems. In: Multiple Criteria Decision Making for Sustainable Energy and Transportation Systems: Proceedings of the 19th International Conference on Multiple Criteria Decision Making, Auckland, New Zealand, 7th-12th January 2008. pp. 367--377. Springer (2010)

\bibitem{rudolph2007capabilities}
Rudolph, G., Naujoks, B., Preuss, M.: Capabilities of emoa to detect and preserve equivalent pareto subsets. In: Evolutionary Multi-Criterion Optimization: 4th International Conference, EMO 2007, Matsushima, Japan, March 5-8, 2007. Proceedings 4. pp. 36--50. Springer (2007)

\bibitem{shir2009enhancing}
Shir, O.M., Preuss, M., Naujoks, B., Emmerich, M.: Enhancing decision space diversity in evolutionary multiobjective algorithms. In: Evolutionary Multi-Criterion Optimization: 5th International Conference, EMO 2009, Nantes, France, April 7-10, 2009. Proceedings 5. pp. 95--109. Springer (2009)

\bibitem{solowPolasky1993MeasuringBioDiv}
Solow, A.R., Polasky, S.: Measuring biological diversity. Environmental and Ecological Statistics  \textbf{1},  95 -- 103 (1994). \doi{https://doi.org/10.1007/BF02426650}

\bibitem{solow1993measurement}
Solow, A.R., Polasky, S., Broadus, J.: On the measurement of biological diversity. Journal of environmental economics and management  \textbf{24}(1),  60--68 (1993)

\bibitem{ulrich2010integrating}
Ulrich, T., Bader, J., Zitzler, E.: Integrating decision space diversity into hypervolume-based multiobjective search. In: Proceedings of the 12th annual conference on Genetic and evolutionary computation. pp. 455--462 (2010)

\bibitem{ulrich2011diversity}
Ulrich, T., Thiele, L.: Maximizing population diversity in single-objective optimization. In: Proceedings of the 13th annual conference on Genetic and evolutionary computation. pp. 641--648. ACM (2011)

\bibitem{weitzman1992diversity}
Weitzman, M.L.: On diversity. The quarterly journal of economics  \textbf{107}(2),  363--405 (1992)

\bibitem{zechman2005comparative}
Zechman, E.M., Ranjithan, S.R., Brill, E.: Comparative study of evolutionary algorithms for generating alternatives for water distribution system design. Environmental Modelling \& Software  \textbf{20}(4),  413--423 (2005)

\end{thebibliography}

	\end{document}